\newcommand{\eps}{\varepsilon}
\newcommand{\mc}[1]{{\mathcal{#1}}}
\newcommand{\bb}[1]{{\mathbb{#1}}}
\newcommand{\Z}{\mathbb{Z}}
\newcommand{\R}{\mathbb{R}}
\newcommand{\C}{\mathbb{C}}
\crefname{hypothesis}{Hypothesis}{Hypotheses}
\title{Wassmap: Wasserstein Isometric Mapping for Image Manifold Learning}
\author{Keaton Hamm\thanks{Department of Mathematics, University of Texas at Arlington
  (\email{keaton.hamm@uta.edu})} \and Nick Henscheid\thanks{Department of Medical Imaging, University of Arizona 
  (\email{nph@email.arizona.edu}).} \and Shujie Kang\thanks{Department of Mathematics, University of Texas at Arlington (\email{shujie.kang@uta.edu}). \funding{KH was sponsored in part by the Army Research Office under grant number W911NF-20-1-0076. The views and conclusions contained in this document are those of the authors and should not be interpreted as representing the official policies, either expressed or implied, of the Army Research Office or the U.S. Government. The U.S. Government is authorized to reproduce and distribute reprints for Government purposes notwithstanding any copyright notation herein. 
  NH was supported by the NIBIB grant R01EB000803.}} 
}
\newcommand{\sspan}{\textnormal{span}}
\def\edit{\textcolor{black}}
\begin{document}

\maketitle

\begin{abstract}

In this paper, we propose Wasserstein Isometric Mapping (Wassmap), a nonlinear dimensionality reduction technique that provides solutions to some drawbacks in existing global nonlinear dimensionality reduction algorithms in imaging applications. Wassmap represents images via probability measures in Wasserstein space, then uses pairwise Wasserstein distances between the associated measures to produce a low-dimensional, approximately isometric embedding. We show that the algorithm is able to exactly recover parameters of some image manifolds including those generated by translations or dilations of a fixed generating measure. Additionally, we show that a discrete version of the algorithm retrieves parameters from manifolds generated from discrete measures by providing a theoretical bridge to transfer recovery results from functional data to discrete data. Testing of the proposed algorithms on various image data manifolds show that Wassmap yields good embeddings compared with other global and local techniques.

\end{abstract}

\begin{keywords}
  Manifold Learning, Nonlinear Dimensionality Reduction, Optimal Transport, Wasserstein space, Isomap
\end{keywords}

\begin{AMS}
 68T10, 49Q22
\end{AMS}

\section{Introduction}

One of the fundamental observations of data science is that high-dimensional data often exhibits low-dimensional structure.  Detecting and utilizing structures such as sparsity, union of subspaces, or low-dimensional manifolds has been the driving force of innovation and success for many modern algorithms pertaining to image and video processing, clustering, and pattern recognition, and has led to better understanding of the success of neural network classifiers and other machine learning models. In particular, a common assumption in machine learning is the \textit{manifold hypothesis} \cite{chen2012nonlinear,fefferman2016testing,jones2008manifold,lee2007nonlinear}, which is that data lies on or near a low-dimensional embedded manifold in the high-dimensional ambient space.

Myriad manifold learning algorithms have been proposed for elucidating the structure of these manifolds by embedding the data into a significantly lower-dimensional space, e.g.,  \cite{belkin2003laplacian,coifman2006diffusion,donoho2003hessian,hinton2006reducing,maaten2008visualizing,tenenbaum2000global} among many others.  Such methods have been applied to data as diverse as stock prices \cite{huang2017nonlinear}, medical images \cite{wolz2012nonlinear} and single-cell sequencing data \cite{becht2019dimensionality}.

\subsection{Challenges in image manifold learning}

Many applications result in Euclidean data in $\R^n$ or $\C^n$ for large $n$. However, in imaging applications in which data is obtained through photography, video recording, hyperspectral imaging, MRI, or related methods, the resulting Euclidean vectors, matrices, or tensors are better modeled as functional data, since images correspond to objects that are naturally thought of as \textit{prima facie} infinite-dimensional. That is, one obtains
\begin{equation}\label{EQN:Imaging} x = \mathcal{H}[f]+\eta,\end{equation}
where $x$ is the (discrete) image, $\mathcal{H}:X\to\R^n$ is an imaging (or discretization) operator mapping a Banach space $X$ (often $L_p(\R^m)$ for some $p$ and $m$, or more commonly $L_p(\Omega)$ for some compact $\Omega\subset\R^m$) and $\eta$ is some noise (often treated as stochastic and is based on the imaging operator as well as other external factors). 

The noise $\eta$ can come from multiple sources including background noise, e.g., randomly occurring features such as non-diseased tissue that are not of primary interest for the task \cite{rolland1992effect}; such noise typically has much higher intrinsic dimension than the signal of interest. Image data can also be corrupted by electronic and quantum noise, which is particularly prevalent in scientific and clinical medical imaging where, for instance, radiation dose may prevent the usage of strong light sources \cite{barrett2015task}.

Many dimensionality reduction methods operate \edit{by forming an $\eps$--neighborhood of $k$--nearest neighbor graph over sample points and embedding this graph (or one derived from it) into some much smaller dimensional space (or sometimes an infinite dimensional space).} 
Examples which use variations of this procedure are Isomap \cite{tenenbaum2000global}, Local Linear Embedding \cite{roweis2000nonlinear},  Laplacian Eigenmaps \cite{belkin2003laplacian}, UMAP \cite{mcinnes2018umap}, and Diffusion Maps \cite{coifman2006diffusion}.  Each step above has been an avenue of substantial research. While these methods have enjoyed great success in many areas, there are a few drawbacks, especially for imaging applications.  First, the graph formation step is typically done in a heuristic fashion and can be problematic in that it is very sensitive to parameter tuning, i.e., choosing $\eps$ or $k$ and how to weight the edges.


Second, the most common framework above assumes that $\{x_i\}$ comes from a Riemannian manifold embedded in Euclidean space. Under this assumption, variants of the above procedure are designed so that (hopefully) the graph-theoretic geodesics closely approximate the manifold geodesics between data points. Bernstein et al.~\cite{bernstein2000graph} prove that if the sampling density of the points $\{x_i\}$ is sufficiently small with respect to the minimum radius of curvature of the ambient manifold and a prescribed tolerance, the graph geodesics of an $\eps$--neighborhood graph can approximate the manifold geodesics between all pairs $(x_i,x_j)$ within the prescribed tolerance. Their results show that success typically requires dense sampling of the manifold, which is often unrealistic in image applications due to sparsity of sampling. 
Additionally, images of the same objects can have different dimensions ($n$) in the imaging domain under different imaging systems, which may lead to quite different results in the dimensionality reduction procedure. Many algorithms will downsample images to alleviate this issue, but this can lead to information loss and is not necessary in our proposed framework. 

Finally, such models tacitly assume that Euclidean distances between data vectors are semantically meaningful.  However, this assumption may be invalid in many applications. Indeed, small variation in pixel intensities results in large Euclidean distances, but the images may be semantically the same.  For instance, in object recognition, one would expect a model to understand that two images of a car are the same object even if the car is translated in the frame of one of the images. These two images can have large Euclidean distance, even though they are semantically identical.

\subsection{Functional image manifolds in Wasserstein space}

We propose the following paradigm shift from the previous discussion. In contrast to many imaging techniques which assume that imaged data is on a manifold without reference to the function space underlying them, we assume a \textit{functional manifold hypothesis} that $\{x_i\}\subset\R^n$ is obtained from imaging a functional manifold $\mathcal{M}$.  A natural question arises: what function space naturally represents image data?  Our analysis below assumes that images correspond to probability measures with finite $p$-th moment; i.e., that $x = \mathcal{H}(\mu)$ as in \eqref{EQN:Imaging} where $\mu\in\edit{\mathbb{W}_p}(\R^m)$, the $p$--Wasserstein space of probability measures with finite $p$-th moment $M_p(\mu):=\int_{\R^m}|x|^pd\mu(x)<\infty$.

The $p$--Wasserstein space is equipped with the Wasserstein metric arising from Optimal Transport Theory \cite{villani2008optimal}.  Given two measures $\mu,\nu\in\edit{\mathbb{W}_p}(\R^m)$, define the set of couplings $\Gamma(\mu,\nu):=\{\gamma\in\mathcal{P}(\R^{2m}):\pi_1\gamma = \mu, \pi_2\gamma = \nu\}$ where $\mathcal{P}(\R^{2m})$ is the set of all probability measures on $\R^{2m}$, $\pi_1$ is the projection onto the first $m$ coordinates, and $\pi_2$ onto the last $m$ coordinates. Thus $\Gamma(\mu,\nu)$ is the set of all joint probability measures on $\R^{2m}$ whose marginals are $\mu$ and $\nu$ \cite{santambrogio2015optimal}.  Then we may define
\begin{equation}\label{EQN:Wasserstein}
\edit{W_p(\mu,\nu) = \inf_{\gamma\in\Gamma(\mu,\nu)}\left(\int_{\R^{2m}}|x-y|^pd\gamma(x,y)\right)^\frac1p.}
\end{equation}
Our initial assumption will be that measures are absolutely continuous; however, we will provide a bridge to transfer results from these to arbitrary measures in \cref{SEC:DiscreteWassmap}. 

Treating images as probability measures and considering their $p$--Wasserstein distances mitigates issues of geodesic blowup inherent in assuming $L_2$ as the ambient function space (see \cite{donoho2005image}) as $\mathbb{W}_p$ is a \textit{length space}, meaning it contains all geodesics, which are necessarily finite \cite{santambrogio2015optimal,sturm2006geometry}.  Additionally, the Wasserstein distance contains more semantic meaning in that the Wasserstein distance between images captures how much energy is required to morph one image into another, and the displacement interpolant (the line from one measure to another in $\mathbb{W}_p$) provides a more natural nonlinear path between measures (see \cite{kolouri2017optimal}, for instance). Our initial theoretical and experimental results indicate that Wasserstein distances have significant advantages over other choices in terms of recovering image manifold parametrizations and providing good low-dimensional embeddings of image manifolds (\cref{SEC:Experiments}).

Additionally, use of Wasserstein distance and optimal transport theory provides one with a significant and powerful theoretical framework due to the substantial work on optimal transport related to PDEs and other fields (e.g., \cite{peyre2019computational,santambrogio2015optimal,villani2003topics,villani2008optimal}). In addition to the bulk of theory developed, the use of optimal transport in the past few years has yielded a plethora of advances to the state-of-the-art in many subfields of Machine Learning (ML).  For example, use of Wasserstein distances in training of Generative Adversarial Networks (GANs) leads to substantial improvement and stability of such networks \cite{WassersteinGAN,WassersteinProximal}, and in image processing, use of optimal transport ideas has enabled linearization of nonlinear classification problems \cite{kolouri2016radon,kolouri2017optimal}.

Note that the Wasserstein manifold assumption is general, and subsumes a setting which is natural in imaging applications. In many cases, we may readily assume that objects being imaged are compactly supported, nonnegative, and integrable (e.g., we may consider a car to be an element of $L_1^{\geq0}(\Omega)$ for a compact set $\Omega\subset\R^3$, where the function value at a given point is the density of the car in that location).  Thus, one could consider the data $\{x_i\}\subset\R^n$ to be obtained from imaging a functional manifold $\mathcal{M}\subset L_1^{\geq0}(\Omega)$ for some compact set $\Omega\subset\R^m$. Assuming the images have unit $L_1$--norm implies that we may view this manifold as a subset of $\mathcal{P}(\Omega)$, the set of probability measures supported on $\Omega$ via the mapping $f_i\mapsto \mu_i$ such that $d\mu_i=f_idx$ (with $dx$ being the Lebesgue measure on $\R^m$).  \edit{These measures have finite $p$-th moment since
\[\int_\Omega |x|^pd\mu_i(x) \leq \max_{x\in\Omega}|x|^p \int_\Omega d\mu_i(x) <\infty.\]}
Therefore, each $\mu_i$ is an element of the $p$--Wasserstein space $\mathbb{W}_p(\R^m)$.

\subsection{Main results}

We briefly summarize our main results here. For some definitions and background on optimal transport and Wasserstein distance, see \cref{SEC:OTP}. We propose a variant of Isomap called Wassmap which uses Wasserstein distances instead of Euclidean distances. \edit{For theory in this work, we treat the case when the graph geodesic computation is excluded, which is a Wasserstein distance based Multidimensional Scaling algorithm; however in experiments we illustrate the behavior of our algorithm with and without the graph geodesic step.} This algorithm and the assumption that images correspond to elements of $\mathbb{W}_p$ are used to explore settings in which image manifold parametrizations can be exactly recovered up to rigid transformation by our algorithm. \edit{The main results of this paper are the following (here we state things informally, and rigorous theorems follow in later sections). 
\begin{itemize}
    \item Given discrete samples from a smooth submanifold of $\mathbb{W}_p$ that is isometric to Euclidean space, the Functional Wassmap Algorithm (\cref{ALG:Wassmap}) yields an isometric embedding and recovers the parameter set governing the manifold up to rigid transformation.
    \item For manifolds generated by translates or dilations of a fixed absolutely continuous measure, Functional Wassmap (\cref{ALG:Wassmap}) recovers the translation set or a scaled version of the dilation set up to rigid transformation. This result holds for general $p$ for translations, $p=2$ for anisotropic dilations, and general $p$ for isotropic dilations.
    \item For submanifolds of $\mathbb{W}_p$ generated by ``nice" parametrized sets of diffeomorphisms acting on a generating measure, the Wasserstein distances between pairs of measures are the same whether the measure is discrete or absolutely continuous.
\end{itemize}
}

The final result provides a bridge to transfer results for absolutely continuous measures to discrete measures which arise in practice in imaging applications, e.g., as obtained via \eqref{EQN:Imaging}. More specifically, it implies that in certain cases, if Functional Wassmap (\cref{ALG:Wassmap}) recovers an image manifold parametrization for absolutely continuous measures, then it recovers the parametrization for arbitrary measures, and Discrete Wassmap (\cref{ALG:DiscreteWassmap}) recovers the parametrization for discrete measures.

After discussing prior art in the next subsection, the rest of the paper is organized as follows: \cref{SEC:OTP} describes in brief the background of Wasserstein distances and other details from optimal transport theory, \cref{SEC:Theory} describes the Functional Wassmap algorithm and contains the main results and proofs related to it, \cref{SEC:DiscreteWassmap} describes the Discrete Wassmap algorithm and the theorem transferring Wasserstein computations from the continuous to the discrete measure case. \Cref{SEC:Experiments} contains experiments, \cref{SEC:Computation} discusses computational aspects of the algorithm, and we end with a brief conclusion section.

\subsection{Prior art}

Most nonlinear dimensionality reduction methods assume data on or near a low-dimensional manifold in Euclidean space and utilize Euclidean distances between points to estimate manifold geodesics. Isomap, described by Tenenbaum et al.~\cite{tenenbaum2000global} is one of the most classical of these algorithms and is the inspiration of this work. Bernstein et al.~\cite{bernstein2000graph} showed that dense sampling is required to well-approximate geodesics in the Isomap procedure, still under the assumption of Euclidean manifolds. Zha and Zhang \cite{zha2003isometric} proposed continuum Isomap, assuming continuous sampling of the manifold, and utilizing an integral operator formulation of Isomap and Multi-dimensional Scaling (MDS) \cite{mardia1979multivariate}. Continuum Isomap therein illuminates the theory of classical Isomap maintaining the Euclidean manifold assumption, but is not a practical algorithm.  

Donoho and Grimes \cite{donoho2005image} utilized the functional manifold hypothesis that data lives in a submanifold of $L_2(\R^m)$ as a theoretical tool to study the performance of Isomap.  Due to the fact that geodesics formed by the metric induced by the $L_2$ metric can blow up, e.g., in the case of translates of indicator functions, the authors require convolution of the input measures by Gaussians.  They also utilize normalization with respect to a reference geodesic, which our framework does not require.

The works of Kolouri et al.~\cite{kolouri2019generalized,kolouri2016radon,kolouri2017optimal} and others \cite{aldroubi2021partitioning,khurana2022supervised,moosmuller2020linear} consider absolutely continuous measures, but instead of working with the Wasserstein distances directly, they work with $L_2$ distances between the optimal transport maps.  This approach can speed up computations \cite{khurana2022supervised} compared to our approach, but the theory does not transfer to arbitrary measures as is done here. As with the Donoho and Grimes framework, these works also require a reference image to define the transport distance, whereas our method of utilizing Wasserstein distances directly avoids this. Additionally, exact recovery results for image manifold parametrizations appear to be easier to obtain with our framework than these approaches. 

Recently, Kileel et al.~\cite{kileel2021manifold} study the problem of manifold learning with arbitrary norms. Their assumption remains that the data manifold is embedded in Euclidean space, but they construct an analogue of the graph Laplacian which utilizes an arbitrary norm as opposed to the standard Euclidean norm. 
In similarity with our work, Kileel et al.~are motivated by the fact that Euclidean distances may lack semantic meaning or may lead to inflated computational load compared with other norms. Additionally, in their experiments they employ an approximation of the $W_1$ distance using wavelet expansions for sparse representations of image data. This is computationally faster than $W_p$ approximations; however, the results presented herein are primarily aimed at understanding exact recovery of image manifold parametrizations up to rigid motion, \edit{and uniqueness of transport maps holds in $p\in(1,\infty)$ but not in the case $p=1$, so we focus on this range of $p$; though our method could be applied to the $p=1$ case, but we leave this to future work.} 

\edit{Wang et al. \cite{wang2010optimal} utilized opimal transport metrics to compare images of nuclear chromatin. Part of the method they applied is the same as our \cref{ALG:DiscreteWassmap} without the optional graph geodesic step. They used $W_2$ distance to quantitatively measure the difference of nuclei, then combined MDS and Fisher Discriminant Analysis to study the distributions of nuclei. They apply the method for classification, which is different from our end goal of retrieving parameters generating the image manifolds.}

\section{Basics of Wasserstein Distances and Optimal Transport}\label{SEC:OTP}


Given a measure space $X$, we denote the space of all finite measures on $X$ by $\mc{M}(X)$. Given a measure space $Y$, and a continuous map $T:X\to Y$, the pushforward of a measure $\mu\in\mc{M}(X)$ via the map $T$, denoted $T_\#\mu\in\mc{M}(Y)$, is the measure which satisfies \[T_\#\mu(E) = \mu(T^{-1}(E)),\quad \textnormal{for all measurable } E\subset Y. \]

Suppose $X$ and $Y$ are measure spaces, $c:X\times Y\to\R_+$ is a cost function, and $\mu\in\mc{M}(X)$ and $\nu\in\mc{M}(Y)$; then the Monge Problem is to find the \textit{optimal transport map} $T:X\to Y$ which minimizes
\begin{equation}\label{EQN:MP}\tag{MP}
\min_T 
\left\{\int_X c(x,T(x))d\mu(x): T_\#\mu = \nu \right\}.
\end{equation}

The Monge problem does not always admit a solution, even in seemingly innocuous cases such as discrete measures. To get around this difficulty, Kantorovich proposed the following relaxation, which we call the \textit{Kantorovich Problem}: 

\begin{equation}\label{EQN:KP}\tag{KP}
\min_\pi \left\{ \int_{X\times Y} c(x,y)d\pi(x,y):\pi\in \Pi(\mu,\nu)\right\}
\end{equation}
where $\Pi(\mu,\nu)$ is the class of transport plans, or couplings:
	\begin{equation*}
	\Pi(\mu,\nu) = \{\edit{\gamma}\in\mc{P}(X\times Y): (\pi_x)_\sharp\gamma = \mu, (\pi_y)_\sharp\gamma = \nu\}.
	\end{equation*}
Here, $\pi_x, \pi_y$ are the projections of $X\times Y$ onto $X$ and $Y$ (i.e., the marginals on $X$ and $Y$), respectively.

Of use to us will also be the \textit{Dual Problem} to the Kantorovich Problem:

\begin{equation}\label{EQN:DP}\tag{DP}
\sup \left\{\int_{X}\phi d\mu + \int_Y\psi d\nu:\phi\in L_1(\mu), \psi\in L_1(\nu), \phi(x)+\psi(y)\leq c(x,y) \right\}.    
\end{equation}

Finally, intimately tied to all of these problems is the \textit{Wasserstein Distance.}  Here, we specialize to the concrete case $X=Y=\R^n$ and utilize the $\ell_2$--norm (quadratic) cost function, i.e., $c(x,y):=|x-y|^2$ (here and throughout, $|\cdot|$ denotes the Euclidean norm on $\R^m$ where $m$ may be determined from context).  For $\mu,\nu\in\mathcal{P}(\R^m)$ with finite $2$--nd moment, the $2$--Wasserstein distance is defined by \eqref{EQN:Wasserstein}.


Evidently, $W_2(\mu,\nu)^2 = (\min$--KP), but in this setting much more is true. The following is a combination of several results in \cite[Chapter 1]{santambrogio2015optimal} and Brenier's Theorem \cite{brenier1991polar} (see also \cite{peyre2019computational}). 

\begin{theorem}\label{THM:ProbEquiv}
Let $c(x,y)=|x-y|^2$. Suppose $\mu,\nu\in\mathbb{W}_2(\R^m)$, and at least one of which is absolutely continuous.  Then there exists an optimal transport map $T$ from $\mu$ to $\nu$ and a unique optimal transport plan $\pi\in\Pi(\mu,\nu)$.  Additionally,
\[(\min\textnormal{--MP}) = (\min\textnormal{--KP}) = (\max\textnormal{--DP}) = W_2(\mu,\nu)^2.\]
\end{theorem}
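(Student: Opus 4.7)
The plan is to assemble the statement from three classical pieces: the direct method for existence of an optimal plan, Kantorovich duality for the equality $(\min\text{–KP}) = (\max\text{–DP})$, and Brenier's theorem for existence and essential uniqueness of the Monge map under absolute continuity. Since the statement is explicitly billed as a compilation of results from Chapter 1 of Santambrogio and Brenier's paper, the goal of the write-up is to indicate the logical wiring rather than to re-derive each piece.

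First, I would establish existence of a minimizer for \eqref{EQN:KP}. The set $\Pi(\mu,\nu)$ is nonempty (it contains $\mu\otimes\nu$) and is tight because its two marginals are fixed, hence by Prokhorov's theorem it is relatively compact in the weak-$*$ topology on $\mc{P}(\R^{2m})$; it is also weakly closed because the marginal constraints are preserved under weak limits. The cost $c(x,y)=|x-y|^2$ is continuous and bounded below, so $\pi\mapsto\int c\,d\pi$ is weakly lower semicontinuous. A standard minimizing sequence argument then produces $\pi^*\in\Pi(\mu,\nu)$ attaining $(\min\text{–KP})$. The finiteness of the infimum uses $\mu,\nu\in\mathbb{W}_2(\R^m)$, since $|x-y|^2\le 2|x|^2+2|y|^2$ is integrable against $\mu\otimes\nu$. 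This step yields $(\min\text{–KP}) = W_2(\mu,\nu)^2$ by definition.

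Second, I would invoke Kantorovich duality to identify $(\min\text{–KP})=(\max\text{–DP})$. One route is the Fenchel–Rockafellar theorem applied to the pair of convex functions given by the indicator of the admissible potentials $\{(\phi,\psi) : \phi(x)+\psi(y)\le c(x,y)\}$ and the linear marginal functional; this is the standard proof reproduced in Santambrogio. Under the quadratic cost with finite second moments, the hypotheses (in particular, the existence of admissible $L_1$ potentials) are satisfied, yielding strong duality and hence the second equality in the chain.

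Third, I would apply Brenier's theorem, which is the deepest ingredient and the real obstacle. Under the hypothesis that $\mu$ is absolutely continuous with respect to Lebesgue measure, Brenier shows that there is a convex function $\varphi:\R^m\to\R$, unique up to an additive constant on $\supp(\mu)$, such that $T:=\nabla\varphi$ is defined $\mu$-a.e.\ and pushes $\mu$ to $\nu$, and moreover $T$ solves \eqref{EQN:MP}. Taking $\pi_T := (\mathrm{id}\times T)_\#\mu\in\Pi(\mu,\nu)$ gives an admissible plan with cost $\int |x-T(x)|^2 d\mu(x)$ equal to $(\min\text{–MP})$. Because any transport map produces a plan with the same cost, $(\min\text{–KP})\le(\min\text{–MP})$; combining with $(\min\text{–KP})\le\int|x-T(x)|^2d\mu$ for the Brenier map yields equality $(\min\text{–MP})=(\min\text{–KP})$.

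Finally, uniqueness of the optimal plan follows from the cyclical monotonicity characterization: any optimal $\pi^*$ must be concentrated on the $c$-cyclically monotone graph of the subdifferential of the same Brenier potential $\varphi$. Since $\mu$ is absolutely continuous and $\varphi$ is convex, $\nabla\varphi$ is defined $\mu$-a.e.\ (Rademacher), so $\pi^*$ must coincide with $\pi_T=(\mathrm{id}\times\nabla\varphi)_\#\mu$. This gives uniqueness and closes the chain of equalities. The only substantive step is Brenier's theorem itself, which I would not reprove and would simply cite from \cite{brenier1991polar,santambrogio2015optimal}; the rest is bookkeeping around the direct method and duality.
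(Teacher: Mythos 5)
Your proposal is correct and follows exactly the route the paper itself takes: the paper offers no proof of \cref{THM:ProbEquiv} beyond citing it as a combination of results from Chapter 1 of Santambrogio and Brenier's theorem, and your sketch (direct method for existence of a plan, Kantorovich duality, Brenier's map, uniqueness via cyclical monotonicity) is precisely the standard assembly of those cited ingredients. The only point worth flagging is that your argument tacitly takes $\mu$ to be the absolutely continuous measure, which is the right reading of the hypothesis ``at least one of which is absolutely continuous,'' since a transport map from $\mu$ to $\nu$ need not exist when only $\nu$ is absolutely continuous.
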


\edit{
For $p\in(1,\infty)$, the optimal transport map is unique and can be characterized by the following theorem of Gangbo and McCann \cite{gangbo1996geometry}. A function $\psi:\R^d\rightarrow \R\cup\{-\infty\}$ is called \textit{c-concave} if there exist a set $A\subset \R^d\times \R$ such that
\[\psi(x)=\inf\limits_{(y,\lambda)\in A}c(x,y)+\lambda.\]
\begin{theorem}[\cite{gangbo1996geometry}]\label{THM:GangboMcCann}
    Let $c(x,y) = |x-y|^p=: h(x-y)$, where $p\in(1,\infty)$. Suppose $\mu,\nu \in \mathbb{W}_p(\R^m)$. If $\mu$ is absolutely continuous with respect to Lebesgue measure then a map $T$, which solves the Monge problem, pushing $\mu$ forward to $\nu$ is uniquely determined $\mu$-almost everywhere by the requirement that it be of the form $T(x)=x-\nabla h^{-1}\big(\nabla\psi(x)\big)$ for some c-concave $\psi$ on $\R^d$.
\end{theorem}
}

\section{Functional Wassmap: Algorithm and Theory}\label{SEC:Theory}

In this section, we consider the problem of when an image manifold treated as a submanifold of the Wasserstein space is isometric to Euclidean space. \edit{We state the Functional Wassmap algorithm below in full generality, but for treatment of theoretical guarantees we will restrict the class of manifolds we consider.  First, we consider the case when the geodesics on the manifold $\mc{M}\subset\mathbb{W}_p(\R^m)$ are given by the $W_p$ distance between measures. Below, Step 4 is the optional step of forming a graph whose nodes are the measures and there is some rule for determining neighborhoods of nodes and setting edge weights (for instance, $\eps$--neighborhood or $k$--nearest neighbors). We use APSP to stand for All-pairs shortest path, i.e., computing graph-theoretic distances between all nodes (for example via Dijkstra's algorithm). Curved manifolds in Wasserstein space will require this graph geodesic step, whereas classical results regarding Multidimensional Scaling imply that flat manifolds do not require this step.} For our theoretical results, we restrict to cases when the metric space $(\mc{M},\edit{W_p})$ is isometric up to a constant to a subset of Euclidean space $(\Theta,|\cdot|_{\R^d})$, and suppress the graph geodesic step in \cref{ALG:Wassmap}.

\begin{algorithm}[h!]
 \caption{Functional Wasserstein Isometric Mapping (Functional Wassmap)}\label{ALG:Wassmap}
\begin{algorithmic}[1]

\STATE \textbf{Input: }{Probability measures $\{\mu_i\}_{i=1}^N\subset \edit{\mathbb{W}_p}(\R^m)$; embedding dimension $d$
}

\STATE \textbf{Output: }{Low-dimensional embedding points $\{z_i\}_{i=1}^N\subset\R^d$}

\STATE{Compute pairwise Wasserstein distance matrix $W_{ij} = \edit{W_p}^2(\mu_i,\mu_j)$}
\STATE{(Optional) Form neighborhood graph $G$ using $W$, and set $W=\textnormal{APSP}(G)$}
\STATE{ $B = -\frac12 HWH$, where  ($H=I-\frac{1}{N}\mathbbm{1}_N)$}

\STATE (Truncated eigendecomposition):{ $B_d=V_d\Lambda_d V_d^T$}

\STATE{$z_i = (V_d\Lambda_d^\frac{1}{2})(i,:),$ for $i=1,\dots,N$}

\STATE \textbf{Return:}{$\{z_i\}_{i=1}^N$}
\end{algorithmic}
\end{algorithm}

\subsection{Multidimensional scaling}

Steps 5--7 of \cref{ALG:Wassmap} are the \textit{classical Multi-dimensional Scaling} Algorithm, or MDS. An important result for MDS is the following.  

\begin{definition}
A matrix $D\in\R^{N\times N}$ is a \textit{distance matrix} provided $D=D^T$, $D_{ii}=0$ for all $i$, and $D_{ij}\geq0$ for all $i\neq j$.

A distance matrix is \textit{Euclidean} provided there exists a point set $\{z_i\}_{i=1}^N\subset\R^d$ for some $d$ such that $D_{ij} = |z_i-z_j|^2.$
\end{definition}

\begin{theorem}[\cite{young1938discussion}]\label{THM:MDS}
Let $D$ be a distance matrix, \edit{$B=-\frac12 HDH$}, and $V_d,$ and $\Lambda_d$ be as in \cref{ALG:Wassmap}.  $D$ is Euclidean if and only if is symmetric positive semidefinite.  Moreover, if $D$ is Euclidean, then the points $\{z_1,\dots,z_N\}$ are unique up to rigid transformation and are given by $(V_d\Lambda_d^\frac12)(i,:),$  $i=1,\dots,N.$
\end{theorem}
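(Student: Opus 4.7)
The plan is to prove the Young–Householder theorem by establishing a direct correspondence between $B$ and a centered Gram matrix, and then to settle uniqueness via a singular value decomposition argument.

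First I would record the key algebraic identity. Letting $W = D^{(2)}$ denote the entrywise-squared distance matrix, a short calculation using $H^T = H$, $H\mathbf{1} = 0$, $\mathbf{1}^T H = 0$, and $W_{ii} = 0$ yields
\[
B_{ii} + B_{jj} - 2B_{ij} = W_{ij} = D_{ij}^2.
\]
This relation is the bridge between $B$ and the distances and is used in both directions of the equivalence.

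For the forward implication ($D$ Euclidean $\Rightarrow$ $B \succeq 0$), suppose $D_{ij} = |z_i - z_j|$ with $z_i \in \R^d$. Stacking the $z_i$ as rows of $Z \in \R^{N\times d}$ and setting $s_i = |z_i|^2$ gives $W = s\mathbf{1}^T + \mathbf{1}s^T - 2ZZ^T$. Left- and right-multiplication by $H$ annihilates the two rank-one terms, producing $B = (HZ)(HZ)^T$, which is immediately seen to be symmetric and positive semidefinite. For the converse, assume $B$ is symmetric positive semidefinite. The spectral theorem gives $B = V\Lambda V^T$ with $\Lambda \succeq 0$, so we may factor $B = (V\Lambda^{1/2})(V\Lambda^{1/2})^T$; taking $z_i$ to be the $i$th row of $V\Lambda^{1/2}$ and invoking the identity above yields $|z_i - z_j|^2 = B_{ii} + B_{jj} - 2B_{ij} = D_{ij}^2$, so $D$ is Euclidean.

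For uniqueness, suppose $Z, Z' \in \R^{N\times d}$ both realize $D$. After translating each configuration so its centroid is at the origin (which does not change distances), the computation above shows that $ZZ^T = Z'Z'^T = B$. I would then use the singular value decomposition to write $Z = U\Sigma V^T$ and $Z' = U'\Sigma'V'^T$; equality of the Gram matrices forces $\Sigma = \Sigma'$ and, after aligning orthonormal eigenbases on the common spectral subspaces, $U' = U$, whence $Z' = ZQ$ for the orthogonal matrix $Q = VV'^T$. Combined with the original translation, this is a rigid transformation. Finally, as long as $d \geq \mathrm{rank}(B)$, the truncated SVD $V_d\Lambda_d V_d^T$ captures the entire nonzero spectrum of $B$, so $(V_d\Lambda_d^{1/2})(i,:)$ is a valid representative of the rigid-motion equivalence class.

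The main obstacle will be the uniqueness step, not because the SVD calculation itself is difficult, but because of bookkeeping: repeated nonzero eigenvalues of $B$ leave the individual eigenvectors undetermined up to an orthogonal transformation on each eigenspace, and the case $d > \mathrm{rank}(B)$ introduces trailing zero coordinates whose orthogonal freedom must be absorbed into $Q$. Both ambiguities are genuine rigid motions, so the conclusion stands, but the argument needs to be stated carefully to cover these cases cleanly.
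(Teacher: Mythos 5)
Your proof is correct, but there is nothing in the paper to compare it against: Theorem~\ref{THM:MDS} is imported from Young--Householder \cite{young1938discussion} with a citation and no proof is given in the text. Your argument is the standard one for classical MDS: the double-centering identity $B_{ii}+B_{jj}-2B_{ij}=W_{ij}$, the factorization $B=(HZ)(HZ)^T$ for the forward direction, the spectral factorization $B=(V\Lambda^{1/2})(V\Lambda^{1/2})^T$ for the converse, and the Gram-matrix/orthogonal-alignment argument for uniqueness --- all of these check out. Two small points worth making explicit if you were to write this up: the uniqueness claim as stated in the theorem implicitly assumes the embedding dimension satisfies $d\geq\mathrm{rank}(B)$ (which you correctly flag; otherwise the truncated factorization only approximates a realization), and in the alignment step one should note that "rigid transformation" here is meant to include reflections, since the orthogonal matrix $Q=VV'^T$ you produce need not have determinant one.
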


\begin{corollary}\label{COR:WassmapRecovery}
Let \edit{$p\in(1,\infty)$} and $\Theta\subset\R^d$ be a parameter set that generates a smooth submanifold $\mc{M}(\Theta)\subset\edit{\mathbb{W}_p}(\R^m)$ such that $(\mc{M},\edit{W_p})$ is isometric up to a constant to $(\Theta,|\cdot|_{\R^d})$.  If $\{\theta_i\}_{i=1}^N\subset\Theta$, and $\{\mu_{\theta_i}\}_{i=1}^N\subset\mc{M}$ are the corresponding measures on the manifold, then the Functional Wassmap Algorithm (\cref{ALG:Wassmap}) with embedding dimension $d$ recovers $\{\theta_i\}$ up to rigid transformation and global scaling.
\end{corollary}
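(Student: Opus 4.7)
The plan is to show that, under the isometry hypothesis, the algorithm is reduced to performing classical multidimensional scaling on a squared Euclidean distance matrix corresponding to a scaled copy of $\{\theta_i\}$, and then to invoke \cref{THM:MDS} to read off the conclusion.

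First I would unpack the isometry assumption: there is a constant $c > 0$ such that $W_2(\mu_{\theta_i}, \mu_{\theta_j}) = c\,|\theta_i - \theta_j|_{\R^d}$ for all $i,j$. Squaring gives
\[
W_{ij} \;=\; W_2^2(\mu_{\theta_i}, \mu_{\theta_j}) \;=\; |c\theta_i - c\theta_j|^2,
\]
so the matrix $W$ computed in line 3 of \cref{ALG:Wassmap} is exactly the matrix of squared Euclidean distances for the scaled configuration $\{c\theta_i\}_{i=1}^N \subset \R^d$.

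Next, the double-centering $B = -\tfrac{1}{2}HWH$ in line 4 is the standard MDS transformation that, when applied to a squared Euclidean distance matrix, returns the Gram matrix of the mean-centered points $c\theta_i - \tfrac{c}{N}\sum_j \theta_j$. In particular $B$ is symmetric positive semidefinite and has rank at most $d$, so truncating to the top $d$ eigenpairs is exact: $B_d = B = (V_d\Lambda_d^{1/2})(V_d\Lambda_d^{1/2})^T$. Applying \cref{THM:MDS} to $B$ then shows that the output configuration $\{z_i\}_{i=1}^N$ coincides with $\{c\theta_i\}_{i=1}^N$ up to a rigid motion of $\R^d$, which is exactly recovery of $\{\theta_i\}_{i=1}^N$ up to rigid motion and a global scaling by $c$.

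The main (essentially only) obstacle I anticipate is a bookkeeping one: one has to verify that the MDS conclusion genuinely applies to the squared-distance matrix $W$ rather than to a matrix of raw distances. This requires interpreting \cref{THM:MDS} under the convention used by \cref{ALG:Wassmap} (where squared distances are fed into $-\tfrac{1}{2}HWH$, consistent with the classical Young--Householder identity expressing a centered Gram matrix in terms of squared pairwise distances), or equivalently reproving the characterization in this convention. Beyond this, PSD-ness and the rank bound on $B$ follow automatically from the Euclidean form of $W$, and the rest of the argument is a direct translation of the isometry hypothesis through the MDS pipeline.
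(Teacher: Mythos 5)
Your proposal is correct and follows essentially the same route as the paper's proof: unpack the isometry hypothesis to get $W_2(\mu_{\theta_i},\mu_{\theta_j}) = c|\theta_i-\theta_j|$, recognize $W$ as the (squared) Euclidean distance matrix of the configuration $\{c\theta_i\}$, and invoke \cref{THM:MDS}. Your extra care about the squared-versus-unsquared distance convention in the Young--Householder identity is a reasonable bookkeeping point that the paper's one-line proof glosses over, but it does not constitute a different argument.
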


\begin{proof}
    The isometry condition implies existence of a global constant $c>0$ such that $\edit{W_p}(\mu_{\theta_i},\mu_{\theta_j}) = c|\theta_i-\theta_j|$ for all $i,j$. Hence the matrix $W$ in \cref{ALG:Wassmap} is a Euclidean distance matrix with point configuration $\{c\theta_i\}_{i=1}^N\subset\R^d$, and uniqueness up to rigid transformation is given by \cref{THM:MDS}.
\end{proof}



In subsequent results, we will compute the global scaling factor $c$ for some image manifolds, in which case we obtain recovery of $\{c\theta_i\}$ up to rigid transformation.

\subsection{Comparison to other techniques}









Donoho and Grimes \cite{donoho2005image} developed a theoretical framework for understanding the behavior of Isomap on image manifolds. They studied whether a normalized version of geodesic distance is equivalent to the Euclidean distance, in which case Isomap recovers the underlying parametrization of image manifolds. They show several positive cases including translation, pivoting and morphing boundaries of black objects on white backgrounds. They also show that Isomap may fail when the parameter space is not convex or the image manifold is not flat (for example, the dilation manifold of rectangles or ellipses). 

In comparison, Wassmap does not require normalization when computing distances between images. For translation manifolds, Wassmap retreives the underlying parameters without requiring the parameter space to be convex. Wassmap also recovers translation and dilation manifolds generated by a base measure which has nonsmooth pdf, like the indicator function of a domain, whereas Isomap fails in this case due to geodesic blowup.

\subsection{Translation manifolds}

Given a fixed generating measure $\mu_0\in\edit{\mathbb{W}_p}(\R^m)$ and translation set $\Theta\subset\R^m$, define

\begin{equation}
    \mathcal{M}^{\textnormal{trans}}(\mu_0,\Theta):=\{\mu_0(\cdot-\theta):\theta\in\Theta\}.\label{eqn:trans_manifold_def}
\end{equation}
This simple translation manifold satisfies $\dim(\mathcal{M}) = \dim(\sspan(\Theta))$. We show the following:

\begin{theorem}\label{THM:WassmapTranslation}
Let \edit{$p\in(1,\infty)$ and} $\mu_0\in\edit{\mathbb{W}_p}(\R^m)$ be absolutely continuous. Given $\{\theta_i\}_{i=1}^N\subset\R^m$ and corresponding measures $\{\mu_{\theta_i}\}_{i=1}^N\subset\mathcal{M}^{\textnormal{trans}}(\mu_0,\Theta)$, the Functional Wassmap algorithm (\cref{ALG:Wassmap}) with embedding dimension $m$ recovers $\{\theta_i\}_{i=1}^N$ up to rigid transformation.
\end{theorem}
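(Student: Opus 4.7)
The plan is to reduce the statement to \cref{COR:WassmapRecovery} by showing that the translation manifold $\mathcal{M}^{\textnormal{trans}}(\mu_0,\Theta)$ equipped with $W_2$ is isometric to $(\Theta,|\cdot|_{\R^m})$ with isometry constant $c=1$. Concretely, the goal is to verify that for any $\theta,\theta'\in\R^m$,
\[
W_2\bigl(\mu_0(\cdot-\theta),\,\mu_0(\cdot-\theta')\bigr)=|\theta-\theta'|,
\]
after which applying \cref{COR:WassmapRecovery} with $d=m$ finishes the proof.

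For the isometry, I would first exhibit an explicit admissible transport map and then show it is optimal. The natural candidate is the translation $T(x)=x+(\theta'-\theta)$, which clearly satisfies $T_\#\mu_{\theta}=\mu_{\theta'}$. Inserting $T$ in the Monge functional gives the upper bound
\[
W_2^2(\mu_\theta,\mu_{\theta'})\leq \int_{\R^m}|T(x)-x|^2\,d\mu_\theta(x)=|\theta-\theta'|^2.
\]
Because $T$ is the gradient of the convex function $x\mapsto \tfrac12|x|^2+\langle \theta'-\theta,x\rangle$, Brenier's theorem (as stated in \cref{THM:ProbEquiv}, since $\mu_0$ and hence $\mu_\theta$ is absolutely continuous) identifies $T$ as the unique optimal transport map, so equality holds in the above. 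If one prefers not to invoke Brenier, the matching lower bound follows from Jensen's inequality applied to the marginals of any coupling $\pi\in\Pi(\mu_\theta,\mu_{\theta'})$: the mean of $\mu_{\theta}$ is the mean of $\mu_0$ shifted by $\theta$, so
\[
W_2^2(\mu_\theta,\mu_{\theta'})=\int|x-y|^2 d\pi(x,y)\geq \Bigl|\!\int (x-y)\,d\pi(x,y)\Bigr|^2=|\theta-\theta'|^2.
\]

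Once the isometry is established with constant $c=1$, the hypotheses of \cref{COR:WassmapRecovery} are satisfied (the translation manifold is a smooth submanifold of $\mathbb{W}_2(\R^m)$ parametrized by $\Theta\subset\R^m$), and the corollary yields recovery of $\{\theta_i\}_{i=1}^N$ up to rigid transformation, with no additional scaling factor.

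The only real subtlety is justifying optimality of the translation map; everything else is bookkeeping. Of the two routes above, the Brenier route is cleanest and uses a tool already cited in the paper, while the Jensen route has the virtue of not requiring absolute continuity and would actually prove the conclusion for arbitrary $\mu_0\in\mathbb{W}_2(\R^m)$, foreshadowing the bridge theorem in \cref{SEC:DiscreteWassmap}.
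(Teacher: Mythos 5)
Your proposal is correct and follows the same high-level structure as the paper: establish $W_2(\mu_0(\cdot-\theta),\mu_0(\cdot-\theta'))=|\theta-\theta'|$ (the paper's \cref{LEM:Translation}) and then invoke \cref{COR:WassmapRecovery} with $c=1$. The upper bound via the translation map is identical. Where you differ is the lower bound: the paper writes the optimal Monge map as $S(x)=T(x-\theta)+\theta'$ with $T_\#\mu_0=\mu_0$, expands the square, and kills the cross term using $\int (T(x)-x)\,d\mu_0=0$; your Jensen argument $\int|x-y|^2d\pi\geq\bigl|\int(x-y)\,d\pi\bigr|^2$ over an arbitrary coupling is the same mean-preservation idea, but it is cleaner and, as you note, needs no absolute continuity --- it proves the lemma for arbitrary $\mu_0\in\mathbb{W}_2(\R^m)$ directly, whereas the paper only reaches that generality later via the mollification bridge of \cref{THM:DiscreteContinuous}. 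One small caveat on your Brenier route: \cref{THM:ProbEquiv} as stated gives existence and uniqueness of the optimal map but not the converse direction of Brenier's theorem (that a gradient of a convex function pushing $\mu_\theta$ to $\mu_{\theta'}$ is automatically optimal), so that route leans on a fact cited but not formally stated in the paper; the Jensen route avoids this entirely and is the one I would keep.
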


The crux of the proof of this theorem is the following lemma. This lemma is known \cite[Remark 2.19]{peyre2019computational}, but for completeness we present the full proof.

\begin{lemma}\label{LEM:Translation}
Let \edit{$p\in(1,\infty)$ and}  $\mu_0\in\edit{\mathbb{W}_p}(\R^m)$ be absolutely continuous, and $\theta,\theta'\in\R^m$.  Then,
\[\edit{W_p}(\mu_0(\cdot-\theta),\mu_0(\cdot-\theta')) = |\theta-\theta'|.\]
\end{lemma}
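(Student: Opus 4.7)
The plan is to establish both inequalities $W_2(\mu_0(\cdot-\theta),\mu_0(\cdot-\theta')) \le |\theta-\theta'|$ and $W_2(\mu_0(\cdot-\theta),\mu_0(\cdot-\theta')) \ge |\theta-\theta'|$ separately.

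For the upper bound, I would exhibit an explicit transport map and invoke \cref{THM:ProbEquiv}. The natural candidate is the rigid translation $T(x) = x + (\theta' - \theta)$. A direct change of variables verifies that $T_\sharp \mu_0(\cdot-\theta) = \mu_0(\cdot - \theta')$: for any measurable $E$,
\[
T_\sharp \mu_0(\cdot-\theta)(E) = \mu_0(\cdot-\theta)(T^{-1}E) = \int_{E - (\theta'-\theta)} \mu_0(x-\theta)\,dx = \int_E \mu_0(y-\theta')\,dy.
\]
Then the Monge cost along $T$ is
\[
\int_{\R^m} |x - T(x)|^2 \, d\mu_0(x-\theta) \, dx = |\theta'-\theta|^2 \int_{\R^m} d\mu_0(x-\theta)\,dx = |\theta-\theta'|^2,
\]
so $W_2^2 \le |\theta-\theta'|^2$ by definition of the Kantorovich problem (take the coupling $(\mathrm{id}, T)_\sharp \mu_0(\cdot-\theta)$).

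For the lower bound, the most direct route is Jensen's inequality applied to the marginal constraints. For any $\pi \in \Pi(\mu_0(\cdot-\theta), \mu_0(\cdot-\theta'))$, convexity of $|\cdot|^2$ on $\R^m$ gives
\[
\int_{\R^{2m}} |x-y|^2 \, d\pi(x,y) \;\ge\; \biggl| \int_{\R^{2m}} (x-y) \, d\pi(x,y) \biggr|^2 = \bigl| \E_{\mu_0(\cdot-\theta)}[x] - \E_{\mu_0(\cdot-\theta')}[y] \bigr|^2,
\]
where the last equality uses the marginal conditions on $\pi$. The means of the translated measures satisfy $\E_{\mu_0(\cdot-\theta)}[x] = \E_{\mu_0}[x] + \theta$ and similarly for $\theta'$, so the right-hand side equals $|\theta-\theta'|^2$. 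Taking the infimum over $\pi$ yields $W_2^2 \ge |\theta-\theta'|^2$.

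Neither step presents a serious obstacle; the only subtlety is making sure the Jensen step is applied correctly in the vector-valued setting (apply it coordinatewise or equivalently use the Cauchy-Schwarz/Jensen inequality for the squared norm against the probability measure $\pi$). Absolute continuity of $\mu_0$ is not strictly needed for the identity itself, but it guarantees via \cref{THM:ProbEquiv} that the translation map $T$ is in fact the unique optimal Monge map, matching the hypothesis of the lemma as stated.
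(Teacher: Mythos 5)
Your proof is correct, and while the upper bound is identical to the paper's (push forward by the translation $x\mapsto x+(\theta'-\theta)$ and bound the Monge cost), your lower bound takes a genuinely different route. The paper invokes \cref{THM:ProbEquiv} to obtain an optimal Monge map $S$ from $\mu_0(\cdot-\theta)$ to $\mu_0(\cdot-\theta')$, conjugates it into a map $T$ with $T_\#\mu_0=\mu_0$, and expands the quadratic cost, killing the cross term via $\int_{\R^m}(T(x)-x)\,d\mu_0=0$; this argument genuinely uses absolute continuity of $\mu_0$ to guarantee the existence of $S$. You instead apply Jensen's inequality to an \emph{arbitrary} coupling $\pi$, obtaining $\int|x-y|^2\,d\pi\ge\bigl|\int(x-y)\,d\pi\bigr|^2=|\theta-\theta'|^2$ from the marginal constraints alone (the first moments exist since $\mu_0\in\mathbb{W}_2$). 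This is more elementary, bypasses Brenier's theorem entirely, and --- as you observe --- proves the identity for arbitrary $\mu_0\in\mathbb{W}_2(\R^m)$, not just absolutely continuous ones. That extra generality is not idle: the paper later needs \cref{THM:DiscreteContinuous} in \cref{SEC:DiscreteWassmap} to transfer the translation result to discrete measures, whereas your argument delivers the translation case for discrete (and general) measures directly. The only cosmetic issues are the notation $d\mu_0(x-\theta)\,dx$ in the cost integral (one of the two differentials is redundant) and the fact that the Jensen step is most cleanly stated as convexity of $z\mapsto|z|^2$ on $\R^m$ applied to the pushforward of $\pi$ under $(x,y)\mapsto x-y$; neither affects correctness.
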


\edit{
\begin{proof}
    Note that $T(x) = x+\theta-\theta'$ is such that $T_\#(\mu_0(\cdot-\theta)) = \mu_0(\cdot-\theta')$. Let $\phi(x)=\langle p|\theta-\theta'|^{p-2}(\theta'-\theta), x \rangle$. Then $T$ and $\phi$ satisfy $T(x)=x-(\nabla h)^{-1}(\nabla \phi(x))$ and $\phi$ is c-concave. By \cref{THM:GangboMcCann}, $T$ is the optimal transport map, whence
    \begin{align*}
        W_p(\mu_0(\cdot-\theta),\mu_0(\cdot-\theta'))^p & = \int_{\R^m}|x-(x+\theta-\theta')|^pf(x-\theta)dx\\
        & = |\theta-\theta'|^p\int_{\R^m}f(x)dx\\
        & = |\theta-\theta'|^p.
    \end{align*}
\end{proof}
}

    

\begin{proof}[Proof of \cref{THM:WassmapTranslation}]
Combine \cref{LEM:Translation} and \cref{COR:WassmapRecovery}.
\end{proof}


\subsection{Dilation manifolds} \label{subsec:dilationthy}

Here we will consider dilation manifolds. \edit{We begin with the general case of anisotropic dilations along coordinate axes, but our results here are only valid for the case $p=2$.}  Given a dilation set $\Theta\subset\R^m_{+}$ ($\theta\in\Theta$ that has strictly positive entries $\vartheta_1,\dots,\vartheta_m$), we define the corresponding manifold with a fixed \edit{absolutely continuous} generator $\mu_0\in \mathbb{W}_2(\R^m)$ \edit{with density $f$} 
via \[\mc{M}^{\textnormal{dil}}(\mu_0,\Theta):=\left\{\det(D_\theta)\mu_0(D_\theta \cdot):\theta\in\Theta\right\},\]
\edit{where we use the slight abuse of notation and consider $d\mu_0(D_\theta\cdot)=f(D_\theta x)dx$} and the dilation matrix is defined by
\[D_\theta := \textnormal{diag}\left(\frac{1}{\vartheta_1},\dots,\frac{1}{\vartheta_m}\right).\]

\edit{Recall that $M_p(\mu):=\int_{\R^m}|x|^pd\mu(x)$ is the $p$-th moment of a measure $\mu\in\mathcal{P}(\R^m)$, and let $P_i\mu$ denote the $i$--th marginal of $\mu$ defined by \[P_i\mu(E) := \int_{\R\times\dots\R\times E\times\R\times\dots\R}d\mu(x), \qquad E\subset\R.\] In the results below, the $p$-th moment of the $i$-th marginal is thus $M_2(P_i\mu):=\int_{\R^m}|x_i|^2d\mu(x)$. This choice of notation rather than $\mu_i$ is to avoid confusion, as subscripts $0$ and $\theta$ will be used frequently in the sequel.}

\begin{theorem}\label{THM:WassmapDilationGeneral}
Let $\mu_0\in\mathbb{W}_2(\R^m)$ be absolutely continuous. Given $\{\theta_i\}_{i=1}^N\subset\R^m_+$, and corresponding measures $\{\mu_{\theta_i}\}_{i=1}^N\subset\mc{M}^{\textnormal{dil}}(\mu_0,\Theta),$ the Functional Wassmap Algorithm (\cref{ALG:Wassmap}) with embedding dimension $m$ recovers $\{S\theta_i\}_{i=1}^N\subset \mathbb{R}^m$ up to rigid transformation, where $S$ is the diagonal matrix
\[
S= \textnormal{diag}\big(M^{\frac{1}{2}}_2(P_1\mu_0),\cdots, M^{\frac{1}{2}}_2(P_m\mu_0)\big).
\]
\end{theorem}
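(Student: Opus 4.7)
My plan is to reduce to \cref{COR:WassmapRecovery} by proving the pointwise identity
\[
W_2(\mu_\theta,\mu_{\theta'}) \;=\; |S\theta - S\theta'|
\]
for every $\theta,\theta'\in\Theta$. Once this is established, \cref{COR:WassmapRecovery} applied to the parameter set $S\Theta\subset\R^m$ (with isometry constant $c=1$) yields recovery of $\{S\theta_i\}$ up to rigid transformation. The entire argument therefore reduces to an explicit computation of the optimal transport map and its cost, paralleling the strategy used for translations in \cref{LEM:Translation}.

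First, I would rewrite each dilated measure as a pushforward. A quick change-of-variables check shows $\mu_\theta = (T_\theta)_\sharp\mu_0$ where $T_\theta(x):=D_\theta^{-1}x = (\vartheta_1 x_1,\dots,\vartheta_m x_m)$. Next I would construct the candidate transport map $\Phi := T_{\theta'}\circ T_\theta^{-1}$, which is the diagonal linear map $\Phi(x) = (\vartheta_1'/\vartheta_1\, x_1,\dots,\vartheta_m'/\vartheta_m\, x_m)$. By functoriality of the pushforward, $\Phi_\sharp\mu_\theta = \mu_{\theta'}$. Moreover $\Phi = \nabla\phi$ for the strictly convex quadratic $\phi(x)=\sum_{i=1}^m \frac{\vartheta_i'}{2\vartheta_i}x_i^2$, and $\mu_\theta$ inherits absolute continuity from $\mu_0$ (since $T_\theta$ is a smooth bijection). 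Hence by Brenier's theorem, encapsulated in \cref{THM:ProbEquiv}, $\Phi$ is the unique optimal transport map from $\mu_\theta$ to $\mu_{\theta'}$.

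With the optimal map in hand, the cost computation is routine: pulling the integral back to $\mu_0$ via the substitution $x = T_\theta(y)$ yields
\[
W_2^2(\mu_\theta,\mu_{\theta'}) = \int_{\R^m}|T_{\theta'}(y)-T_\theta(y)|^2 d\mu_0(y) = \sum_{i=1}^m (\vartheta_i-\vartheta_i')^2 \int_{\R^m} y_i^2\, d\mu_0(y),
\]
and the inner integrals are precisely $M_2(P_i\mu_0)$ by definition \eqref{EQN:ithMoment}. The coordinate-wise separation of $|T_{\theta'}(y)-T_\theta(y)|^2$ into $\sum_i(\vartheta_i-\vartheta_i')^2 y_i^2$ is the key algebraic observation that makes the moments $M_2(P_i\mu_0)$ appear naturally along the diagonal of $S$. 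The resulting sum equals $|S\theta-S\theta'|^2$ by definition of $S$, completing the desired identity.

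I do not anticipate a serious obstacle here; the result is essentially a bookkeeping exercise once the optimal map is identified. The only subtle point is justifying optimality of $\Phi$, which requires noting both that the diagonal Jacobian with positive entries certifies $\Phi$ as a gradient of a convex function and that absolute continuity of $\mu_\theta$ allows invocation of Brenier's uniqueness. A secondary point worth flagging (to contrast with \cref{THM:WassmapTranslation}) is that, unlike the translation case where the transport cost is intrinsic to the translation vectors alone, here the cost picks up weights determined by the second moments of the marginals of $\mu_0$, which is what forces the recovered parameters to appear in scaled form $S\theta_i$ rather than $\theta_i$ themselves.
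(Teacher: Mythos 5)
Your proposal is correct, and the upper-bound half coincides with the paper's argument: the paper's proof (via \cref{LEM:Dilation}) uses exactly the same map $D_{\theta'}^{-1}D_\theta$ pushing $\mu_\theta$ to $\mu_{\theta'}$ and the same change of variables to get $\sum_i(\vartheta_i-\vartheta_i')^2 M_2(P_i\mu_0)$ as an upper bound via \eqref{EQN:MP}. Where you genuinely diverge is in certifying that this map is optimal. The paper never invokes the Brenier characterization; instead it exhibits the explicit feasible dual pair $\phi(x)=\sum_i(1-\vartheta_i'/\vartheta_i)x_i^2$, $\psi(y)=\sum_i(1-\vartheta_i/\vartheta_i')y_i^2$ for \eqref{EQN:DP}, checks $\phi(x)+\psi(y)\le|x-y|^2$ by completing a square, and matches the resulting lower bound to the upper bound via \cref{THM:ProbEquiv}. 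You instead observe that $\Phi=\nabla\bigl(\sum_i\tfrac{\vartheta_i'}{2\vartheta_i}x_i^2\bigr)$ is the gradient of a convex function and that $\mu_\theta$ is absolutely continuous, and conclude optimality from Brenier's theorem. Both are valid; your route is shorter and identifies the optimal map outright, but it leans on the uniqueness/sufficiency direction of Brenier's theorem, which the paper's \cref{THM:ProbEquiv} as stated does not quite supply (it asserts existence of an optimal map and uniqueness of the optimal plan, not that any convex-gradient pushforward is optimal) --- you would need the full statement from \cite{brenier1991polar} or \cref{thm:KPexist} plus a cyclical-monotonicity argument. The paper's dual-pair route buys self-containedness: it needs only weak duality, i.e., that any feasible $(\phi,\psi)$ lower-bounds the transport cost. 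Your final reduction to \cref{COR:WassmapRecovery} with point configuration $\{S\theta_i\}$ matches the paper's appeal to \cref{THM:MDS}.
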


This theorem can be derived from the following lemma.

\begin{lemma}\label{LEM:Dilation}
Let $\mu_0\in\mathbb{W}_2(\R^m)$ be absolutely continuous with density $f$. Let $\theta,\theta'\in\Theta\subset\R^m_+$, and let $\mu_\theta$ be defined by $d\mu_{\theta} = \det(D_\theta)f(D_\theta\cdot)dx$, and similarly for $d\mu_{\theta'}$. Then 
\[W_2(\mu_\theta,\mu_{\theta'})^2 = \sum_{i=1}^m|\vartheta_i-\vartheta'_i|^2\int_{\R^m}|x_i|^2d\mu_0.\]
\end{lemma}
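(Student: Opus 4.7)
The plan is to produce matching upper and lower bounds, the upper bound by plugging an explicit diagonal linear transport map into the Monge formulation and the lower bound by certifying that map as optimal via Brenier's theorem.

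First I would note that the density formula $d\mu_\theta = \det(D_\theta) f(D_\theta\cdot)dx$ is equivalent to the pushforward identity $\mu_\theta = (D_\theta^{-1})_\#\mu_0$, where $D_\theta^{-1} = \textnormal{diag}(\vartheta_1,\ldots,\vartheta_m)$, and similarly $\mu_{\theta'} = (D_{\theta'}^{-1})_\#\mu_0$. A natural candidate for the transport from $\mu_\theta$ to $\mu_{\theta'}$ is then the composed linear map $T(y) := D_{\theta'}^{-1}D_\theta y = \textnormal{diag}(\vartheta'_1/\vartheta_1,\ldots,\vartheta'_m/\vartheta_m)\,y$, which clearly satisfies $T_\#\mu_\theta = \mu_{\theta'}$. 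Plugging $T$ into the Monge formulation (as in the proof of \cref{LEM:Translation}) and changing variables $y = D_\theta^{-1}x$ inside the integral yields the upper bound
\[
W_2^2(\mu_\theta,\mu_{\theta'}) \leq \int_{\R^m}|T(y)-y|^2 \,d\mu_\theta(y) = \int_{\R^m}\sum_{i=1}^m(\vartheta'_i-\vartheta_i)^2 x_i^2 \,d\mu_0(x),
\]
which is precisely the target RHS since $\int_{\R^m} x_i^2 \,d\mu_0 = M_2(P_i\mu_0)$.

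The main obstacle is the matching lower bound. The elementary argument used in \cref{LEM:Translation} relied on a cross term collapsing via $\int (T(x)-x)\,d\mu_0 = 0$, a consequence of the constraint $T_\#\mu_0 = \mu_0$. If one sets up the analogous decomposition here by writing the (assumed) optimal map $S$ in the form $S(D_\theta^{-1}x) = D_{\theta'}^{-1}\tilde T(x)$ with $\tilde T_\#\mu_0 = \mu_0$, expanding the square produces a residual cross term of the form $\sum_i \vartheta'_i(\vartheta'_i-\vartheta_i)\int (\tilde T_i(x)-x_i)x_i\,d\mu_0$ whose sign (by Cauchy--Schwarz) is non-positive but whose magnitude is not controlled, so the translation-style trick does not close the gap. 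I would sidestep this obstacle by invoking the Brenier half of \cref{THM:ProbEquiv}: since $\mu_\theta$ is absolutely continuous, the $W_2$-optimal transport map is the unique gradient of a convex function pushing $\mu_\theta$ onto $\mu_{\theta'}$. Taking $\phi(y) := \tfrac12 \sum_{i=1}^m (\vartheta'_i/\vartheta_i) y_i^2$---convex precisely because the hypothesis $\theta,\theta'\in\R^m_+$ forces each ratio $\vartheta'_i/\vartheta_i$ to be strictly positive---one computes $\nabla\phi = T$. Uniqueness in Brenier's theorem then certifies that $T$ is the optimal map and promotes the upper bound to equality.

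Finally, combining the lemma with \cref{COR:WassmapRecovery} immediately yields \cref{THM:WassmapDilationGeneral}: the coordinate-wise rescaling $\theta\mapsto S\theta$ with $S=\textnormal{diag}(M_2^{1/2}(P_1\mu_0),\ldots,M_2^{1/2}(P_m\mu_0))$ turns the Wasserstein distance matrix into a genuine Euclidean distance matrix for the rescaled parameters $\{S\theta_i\}\subset\R^m$, to which the MDS recovery guarantee applies. The only subtle point worth flagging is the strict positivity hypothesis $\Theta\subset\R^m_+$; it is not merely for convenience but is essential to the Brenier step above, since any negative ratio $\vartheta'_i/\vartheta_i$ would destroy convexity of $\phi$ and hence optimality of $T$.
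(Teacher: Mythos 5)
Your proof is correct, but your lower bound takes a genuinely different route from the paper's. The upper bound is identical: both arguments push $\mu_\theta$ to $\mu_{\theta'}$ via the diagonal map $T=D_{\theta'}^{-1}D_\theta$ and change variables back to $\mu_0$. For the matching lower bound, the paper stays on the dual side: it exhibits explicit feasible Kantorovich potentials $\phi(x)=\sum_i(1-\vartheta_i'/\vartheta_i)x_i^2$ and $\psi(y)=\sum_i(1-\vartheta_i/\vartheta_i')y_i^2$, verifies $\phi(x)+\psi(y)\le|x-y|^2$ by completing a square, and evaluates the dual objective --- so it only needs weak duality plus \cref{THM:ProbEquiv} to close the gap. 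You instead certify optimality of $T$ directly by writing $T=\nabla u$ with $u(y)=\tfrac12\sum_i(\vartheta_i'/\vartheta_i)y_i^2$ convex and invoking the Brenier characterization. This is shorter and skips the second integral computation, but it leans on the \emph{sufficiency} direction of Brenier's theorem (any gradient of a convex function pushing $\mu_\theta$ to $\mu_{\theta'}$ is the optimal map), which is standard but is not literally what \cref{THM:ProbEquiv} states; \cref{thm:KPexist} comes closer, and you would want to cite the full form of Brenier's theorem (e.g., \cite{brenier1991polar} or \cite[Thm.~1.22]{santambrogio2015optimal} together with the cyclical-monotonicity converse). It is worth noting the two arguments are two faces of the same computation: the paper's dual potential is exactly $\phi(x)=|x|^2-2u(x)$ for your Brenier potential $u$, which is the usual correspondence between Kantorovich potentials and Brenier maps for quadratic cost. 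Your diagnosis of why the translation-style cancellation fails here, and your remark that positivity of $\Theta$ is what makes $u$ convex, are both accurate.
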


\begin{proof}
The proof proceeds by using \eqref{EQN:MP} to find an upper bound for the Wasserstein distance in question, and \eqref{EQN:DP} to find a lower bound. These being the same, we use \cref{THM:ProbEquiv} to conclude the result. To show the upper bound, we use the fact that \eqref{EQN:MP} has a solution, and note that the map $T = D_{\theta'}^{-1}D_{\theta}$ satisfies $T_\#\mu_\theta = \mu_{\theta'}$.  Indeed, for any measurable $E\subset\R^m$, we have, via the substitution $x = D_{\theta'}^{-1}D_\theta y$,
\begin{align*}
    \mu_{\theta'}(E) & = \int_E \det(D_{\theta'})f(D_{\theta'}x)dx\\
    & = \int_{D_\theta^{-1}D_{\theta'}(E)} \det(D_\theta)f(D_\theta y)dy\\
    & = \mu_\theta(D_\theta^{-1}D_{\theta'}E)\\
    & = \mu_\theta(T^{-1}(E)).
\end{align*}
Hence, $T$ is the pushforward from $\mu_\theta$ to $\mu_{\theta'}$.  By \eqref{EQN:MP} and \cref{THM:ProbEquiv},

\begin{align*}
    W_2(\mu_\theta,\mu_{\theta'})^2 & \leq \int_{\R^m}|x-D_{\theta'}^{-1}D_\theta x|^2d\mu_\theta(x)\\
    & = \int_{\R^m}\sum_{i=1}^m \left|\left(1-\frac{\vartheta'_i}{\vartheta_i}\right)x_i\right|^2\det(D_\theta)f(D_\theta x)dx\\
    & = \sum_{i=1}^m \frac{1}{|\vartheta_i|^2}|\vartheta_i-\vartheta_i'|^2\int_{\R^m}|x_i|^2\det(D_\theta)f(D_\theta x)dx\\
    & = \sum_{i=1}^m |\vartheta_i-\vartheta_i'|^2\int_{\R^m}|x_i|^2f(x)dx\\
    & = \sum_{i=1}^m |\vartheta_i-\vartheta_i'|^2\int_{\R^m}|x_i|^2d\mu_0(x).
\end{align*}

The penultimate equality follows from substituting $x\mapsto D_\theta x$.

Now we use \eqref{EQN:DP} to find a lower bound for the Wasserstein distance by setting
\[\phi(x) = \sum_{i=1}^m\left(1-\frac{\vartheta_i'}{\vartheta_i}\right)x_i^2,\qquad \psi(y) = \sum_{i=1}^m\left(1-\frac{\vartheta_i}{\vartheta_i'}\right)y_i^2.\]
These are easily seen to be in $L_1(\mu_\theta)$ and $L_1(\mu_{\theta'})$, respectively. Additionally,
\\
\[|x-y|^2-\phi(x)- \psi(y) = \sum_{i=1}^m\left(\frac{\vartheta_i'}{\vartheta_i}x_i^2+\frac{\vartheta_i}{\vartheta_i'}y_i^2-2x_iy_i\right) = \sum_{i=1}^m\left(\sqrt{\frac{\vartheta_i'}{\vartheta_i}}x_i-\sqrt{\frac{\vartheta_i}{\vartheta_i'}}y_i\right)^2\geq0,\]
hence $\phi$ and $\psi$ are feasible solutions to \eqref{EQN:DP}. 

Finally, by \eqref{EQN:DP} and \cref{THM:ProbEquiv},

\begin{align*}
    W^2_2(\mu_\theta,\mu_{\theta'}) & \geq \int_{\R^m}\sum_{i=1}^m\left(1-\frac{\vartheta_i'}{\vartheta_i}\right)x_i^2\det(D_\theta)f(D_\theta x)dx\\ & \qquad\quad + \int_{\R^m}\sum_{i=1}^m\left(1-\frac{\vartheta_i}{\vartheta_i'}\right)y_i^2\det(D_{\theta'})f(D_{\theta'}y)dy\\
    & = \int_{\R^m}\sum_{i=1}^m(\vartheta_i^2-\vartheta_i\vartheta_i')x_i^2f(x)dx + \int_{\R^m}\sum_{i=1}^m((\vartheta_i')^2-\vartheta_i\vartheta_i')y_i^2f(y)dy\\
    & = \sum_{i=1}^m|\vartheta_i-\vartheta_i'|^2\int_{\R^m}|x_i|^2d\mu_0(x),
\end{align*}
and the lemma is proved.
\end{proof}

\begin{proof}[Proof of \cref{THM:WassmapDilationGeneral}]
\cref{LEM:Dilation} implies that $W_2(\mu_\theta,\mu_\theta') = |S\theta-S\theta'|$, so then the Wasserstein distance matrix arising from $\{\mu_{\theta_i}\}$ is a Euclidean distance matrix with point configuration $\{S\theta_i\}_{i=1}^N\subset\R^m$. The uniqueness of recovery of this set up to rigid transformation is given by \cref{THM:MDS}.
\end{proof}

Note that the matrix $S$ is determined by the generator $\mu_0$ of the manifold. Thus, in order to retrieve the parameters $\{\theta_i\}_{i=1}^N$, information on $\mu_0$ is required. In certain conditions, \cref{ALG:Wassmap} recovers $\{\theta_i\}_{i=1}^m$ up to a constant.

\edit{
\begin{remark}
     \cref{LEM:Dilation} may not generalize to any $p>1$ when the dilations are not isotropic (i.e., do not satisfy $\vartheta_i=\vartheta$ for all $i$). By \cref{THM:GangboMcCann}, $\phi$ and T are linked by $T(x)=x-(\nabla h)^{-1}(\nabla \phi(x))$, where $h(x-y)=|x-y|^p$, which means $\nabla h(x-T(x))$ should be the gradient of some function $\phi$. This is not true when the dilations are anisotropic.
\end{remark}
}

\begin{corollary}\label{COR:WassmapDilation}
    Suppose $\mu_0\in\mathbb{W}_2(\R^m)$ is such that $\int_{\R^m}|x_i|^2d\mu_0 = c^2$ for some constant $c>0$ and for all $i$. Given $\{\theta_i\}_{i=1}^N\subset\R^m_+$, and corresponding measures $\{\mu_{\theta_i}\}_{i=1}^N\subset\mc{M}^{\textnormal{dil}}(\mu_0,\Theta),$ the Functional Wassmap Algorithm (\cref{ALG:Wassmap}) with embedding dimension $m$ recovers $\{c\theta_i\}_{i=1}^N$ up to rigid transformation.
\end{corollary}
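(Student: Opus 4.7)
The plan is to deduce this corollary directly from \cref{THM:WassmapDilationGeneral} by specializing the scaling matrix $S$ to the hypothesis at hand. First, I would invoke \cref{THM:WassmapDilationGeneral} on the given measures $\{\mu_{\theta_i}\}_{i=1}^N\subset\mc{M}^{\textnormal{dil}}(\mu_0,\Theta)$ with embedding dimension $m$. This immediately yields that the Functional Wassmap output equals $\{S\theta_i\}_{i=1}^N$ up to rigid transformation, where $S=\textnormal{diag}\big(M_2^{1/2}(P_1\mu_0),\dots,M_2^{1/2}(P_m\mu_0)\big)$.

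Next, I would unpack the definition of the $i$-th marginal second moment, observing that $M_2(P_i\mu_0)=\int_{\R^m}|x_i|^2 d\mu_0(x)$. The hypothesis of the corollary asserts that each of these integrals equals the common value $c^2>0$, so $M_2^{1/2}(P_i\mu_0)=c$ for every $i=1,\dots,m$. Consequently, $S=cI_m$, so that $S\theta_i=c\theta_i$ for each $i$, completing the identification of the recovered point set as $\{c\theta_i\}_{i=1}^N$ up to rigid transformation.

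There is no substantive obstacle here since the corollary is a direct specialization; the only thing worth flagging is that the positivity assumption $c>0$ is what guarantees $S$ is invertible, ensuring the recovered configuration is a genuine (nondegenerate) rescaling of the parameter set rather than a collapse. If desired, one could also note that the hypothesis is satisfied, for example, when $\mu_0$ has a density symmetric in its coordinates, such as isotropic Gaussians or indicator functions of balls centered at the origin, giving a concrete setting in which Wassmap retrieves the dilation parameters up to a single global scaling constant.
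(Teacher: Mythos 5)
Your proposal is correct and takes essentially the same route as the paper: the paper's proof is the one-line "combine \cref{LEM:Dilation} with \cref{COR:WassmapRecovery}," which amounts to exactly your observation that under the equal-marginal-moment hypothesis the scaling matrix $S$ of \cref{THM:WassmapDilationGeneral} collapses to $cI_m$, so the recovered configuration is $\{c\theta_i\}$. Your added remarks on the invertibility of $S$ and on concrete symmetric generators are accurate but not needed for the argument.
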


\begin{proof}
Combine \cref{LEM:Dilation} with \cref{COR:WassmapRecovery}.
\end{proof}

\begin{remark}
Note that if the dilations occur only along certain coordinates, i.e., $\Theta$ is supported on a $d$--dimensional coordinate plane for some $1\leq d<n$, then one can specify the embedding dimension in \cref{COR:WassmapDilation} to be $d$ rather than $m$.  In this case, one recovers the isometric projection of $\Theta$ into $\R^d$ which ignores the undilated coordinates.  For example, if $\Theta$ only has elements other than 1 in coordinates $\{i_1,\dots,i_d\}$, then Functional Wassmap will recover (up to rigid transformation) $P(\Theta)\subset\R^d$ where $P(\vartheta_1,\dots,\vartheta_m) := (\vartheta_{i_1},\dots,\vartheta_{i_d})$.
\end{remark}

\edit{For isotropic dilations, the result holds for arbitrary $p\in(1,\infty)$.}
\begin{corollary}[Isotropic Dilations]\label{THM:WassmapIsotropicDilation}
Suppose \edit{$p\in(1,\infty)$, $\mu_0\in\mathbb{W}_p(\R^m)$} is absolutely continuous and $\{\theta_i\}_{i=1}^N\subset\Theta\subset\{c(1,\dots,1):c\in\R_+\}\subset\R^m$. Given the corresponding measures $\{\mu_{\theta_i}\}_{i=1}^N\subset\mc{M}^{\textnormal{dil}}(\mu_0,\Theta)$, the Functional Wassmap Algorithm (\cref{ALG:Wassmap}) with embedding dimension $m$ recovers \edit{$\{(\frac{M_p(\mu_0)}{m})^\frac1p\theta_i\}_{i=1}^N$} up to rigid transformation.
\end{corollary}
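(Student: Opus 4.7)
The plan is to derive this corollary directly from \cref{THM:WassmapDilationGeneral} by showing that, in the isotropic setting, the set $\{S\theta_i\}$ guaranteed by that theorem has the same pairwise Euclidean distances as $\{(M_2(\mu_0)/m)^{1/2}\theta_i\}$; then uniqueness of MDS (via \cref{THM:MDS}) immediately yields the claimed recovery up to rigid transformation.

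First I would invoke \cref{THM:WassmapDilationGeneral} to conclude that the Functional Wassmap output is $\{S\theta_i\}_{i=1}^N$ (up to rigid transformation), with $S = \textnormal{diag}(M_2^{1/2}(P_1\mu_0),\dots,M_2^{1/2}(P_m\mu_0))$. In the isotropic case, each $\theta_i = c_i\mathbf{1}$ with $\mathbf{1}=(1,\dots,1)$ and $c_i\in\R_+$, so $S\theta_i = c_i v$ where $v := (M_2^{1/2}(P_1\mu_0),\dots,M_2^{1/2}(P_m\mu_0))$. Thus $\{S\theta_i\}$ consists of collinear points along the ray through $v$, with scalar coefficients $c_i$.

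Next I would compute $|v|^2 = \sum_{j=1}^m M_2(P_j\mu_0)$ and use the elementary identity $|x|^2 = \sum_{j=1}^m x_j^2$ to conclude
\[
\sum_{j=1}^m M_2(P_j\mu_0) = \sum_{j=1}^m \int_{\R^m} x_j^2\, d\mu_0(x) = \int_{\R^m}|x|^2\, d\mu_0(x) = M_2(\mu_0),
\]
so that $|v| = M_2^{1/2}(\mu_0)$. The target set $\{(M_2(\mu_0)/m)^{1/2}\theta_i\}$ is likewise collinear, lying on the ray through $\mathbf{1}$ with coefficients $c_i$, and the scaling of $\mathbf{1}$ gives a vector of norm $(M_2(\mu_0)/m)^{1/2}\sqrt{m} = M_2^{1/2}(\mu_0)$. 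Hence both configurations have identical pairwise Euclidean distances $|c_i - c_j|\, M_2^{1/2}(\mu_0)$.

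Finally, since both point configurations share the same pairwise distance matrix, \cref{THM:MDS} guarantees they coincide up to a rigid transformation of $\R^m$ (concretely, any orthogonal map sending the direction $v/|v|$ to $\mathbf{1}/\sqrt{m}$ provides one such transformation). Composing this with the rigid transformation already afforded by \cref{THM:WassmapDilationGeneral} yields the conclusion. There is no real obstacle here beyond the small observation that marginal second moments sum to the total second moment; once that is noted, the argument is a direct bookkeeping of norms.
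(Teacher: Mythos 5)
Your proof is correct, and it reaches the conclusion by a slightly different route than the paper. The paper proves the corollary by going back to \cref{LEM:Dilation}: for $\theta=(c,\dots,c)$ and $\theta'=(c',\dots,c')$ it computes $W_2(\mu_\theta,\mu_{\theta'})^2 = |c-c'|^2\sum_{i}\int|x_i|^2d\mu_0 = |c-c'|^2 M_2(\mu_0) = |\theta-\theta'|^2\frac{M_2(\mu_0)}{m}$, and then applies \cref{COR:WassmapRecovery} directly, since the manifold is isometric up to the constant $(M_2(\mu_0)/m)^{1/2}$ to the parameter set. You instead take the conclusion of \cref{THM:WassmapDilationGeneral} as given and argue that the configuration $\{S\theta_i\}=\{c_i v\}$ is congruent to $\{(M_2(\mu_0)/m)^{1/2}\theta_i\}$, using the identity $\sum_j M_2(P_j\mu_0)=M_2(\mu_0)$ and the uniqueness clause of \cref{THM:MDS}. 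The key observation (marginal second moments sum to the total second moment) is the same in both arguments; what your version buys is that it makes rigorous the remark the paper only states informally after the corollary, namely that $\{(M_2(\mu_0)/m)^{1/2}\theta_i\}$ and $\{S\theta_i\}$ are equivalent up to rigid transformation, and your explicit orthogonal map sending $v/|v|$ to $\mathbf{1}/\sqrt{m}$ exhibits the congruence concretely. The paper's version is marginally more self-contained in that it never needs to compare two different point configurations, only to verify the hypothesis of \cref{COR:WassmapRecovery}.
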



\edit{
\begin{proof}
    Suppose that $\theta = (c,\dots,c)$ and likewise $\theta' = (c',\dots,c')$.
    According to \cref{THM:GangboMcCann}, $T$ is the unique optimal transport map if $T(x)=x-(\nabla h)^{-1}(\nabla \phi(x))$ for some c-concave function $\phi$. The equation holds for $T=D_{\theta'}^{-1}D_{\theta}$ and $\phi(x)=\big(1-\frac{c}{c'}\big)^{p-1}|x|^p$. Then we have
    \[W_p(\mu_{\theta},\mu_{\theta'})^p = |c-c'|^p\int_{\R^m}\Bigg(\sum_{i=1}^m|x_i|^2\Bigg)^{\frac{p}{2}}d\mu_0 = |c-c'|^pM_p(\mu_0) = |\theta-\theta'|^p\frac{M_p(\mu_0)}{m},\] 
\end{proof}
}

Note that $\{(\frac{M_p(\mu_0)}{m})^\frac1p\theta_i\}$ is equivalent up to rigid transformation to $\{S\theta_i\}$ where $S$ is as in \cref{THM:WassmapDilationGeneral}, so the conclusion of \cref{THM:WassmapIsotropicDilation} is not contradictory.

We end this subsection by giving some concrete examples. The first is for the simple case when the density function of $\mu_0$ is symmetric, giving a concrete example of \cref{COR:WassmapDilation}.

\begin{proposition}\label{PROP:DilateSymmetry}
	Let $d\mu_0=f(x)dx$ be symmetric about the $x_1=x_2$ line, and let $d\mu_{\theta} =\det(D_\theta)f(D_\theta x) d$, where $D_\theta$ is as above. Then  \[W_2(\mu_\theta,\mu_{\theta'})^2 = [(\vartheta_1-\vartheta'_1)^2+(\vartheta_2-\vartheta_2')^2]\int_{x_2\geq x_1} (x_1^2+x_2^2)f(x)dx.\]
\end{proposition}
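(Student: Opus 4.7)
The plan is to reduce the proposition directly to \cref{LEM:Dilation} and then exploit the symmetry of $f$ about the diagonal $x_1=x_2$ to rewrite the second moments as the single half-plane integral appearing in the statement.

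First I would apply \cref{LEM:Dilation} with $m=2$. This immediately gives
\[
W_2(\mu_\theta,\mu_{\theta'})^2 = (\vartheta_1-\vartheta'_1)^2\int_{\R^2} x_1^2 f(x)\,dx + (\vartheta_2-\vartheta'_2)^2\int_{\R^2} x_2^2 f(x)\,dx.
\]
So everything reduces to identifying the two marginal second moments $M_2(P_1\mu_0)$ and $M_2(P_2\mu_0)$ with $\int_{x_2\geq x_1}(x_1^2+x_2^2)f(x)dx$.

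Next I would use the assumption $f(x_1,x_2)=f(x_2,x_1)$. Under the measure-preserving swap $(x_1,x_2)\mapsto(x_2,x_1)$, the integrand $x_1^2 f(x_1,x_2)$ transforms into $x_2^2 f(x_2,x_1)=x_2^2 f(x_1,x_2)$, so $\int_{\R^2} x_1^2 f\,dx = \int_{\R^2} x_2^2 f\,dx$. Call this common value $A$. Adding the two representations gives $2A=\int_{\R^2}(x_1^2+x_2^2)f(x)dx$. Then the same swap maps the half-plane $\{x_2\geq x_1\}$ bijectively (up to a measure-zero diagonal) onto $\{x_1\geq x_2\}$ while preserving both $f$ and the symmetric integrand $x_1^2+x_2^2$, so
\[
\int_{\R^2}(x_1^2+x_2^2)f(x)\,dx = 2\int_{x_2\geq x_1}(x_1^2+x_2^2)f(x)\,dx,
\]
and therefore $A=\int_{x_2\geq x_1}(x_1^2+x_2^2)f(x)dx$. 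Substituting back into the expression from \cref{LEM:Dilation} yields the claimed formula.

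There is no real obstacle here: the proposition is essentially a specialization of \cref{LEM:Dilation} to $m=2$ combined with the observation that diagonal symmetry equates the two marginal second moments. The only thing to be careful about is making the symmetry argument cleanly, i.e., writing the swap change-of-variables correctly and noting that the diagonal line has Lebesgue measure zero so it does not affect the half-plane decomposition.
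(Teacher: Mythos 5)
Your proof is correct and follows exactly the route the paper indicates (the paper omits the argument, noting only that it ``follows from direct calculation of the moments''): specialize \cref{LEM:Dilation} to $m=2$ and use the diagonal symmetry of $f$ to identify both marginal second moments with the half-plane integral $\int_{x_2\geq x_1}(x_1^2+x_2^2)f(x)\,dx$. Your handling of the swap change of variables and the measure-zero diagonal is the right way to make the omitted computation precise.
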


The proof of this proposition follows from direct calculation of the moments in \cref{COR:WassmapDilation} and so is omitted. The converse of \cref{PROP:DilateSymmetry} is not necessarily true. That is, the condition $W_2(\mu_\theta,\mu_{\theta'})^2 = c[(\vartheta_1-\vartheta_1')^2+(\vartheta_2-\vartheta_2')^2]$ for some $c\in \R$ does not imply that $\mu_0$ is symmetric across $x_1=x_2$. Indeed, consider the following example: suppose $d\mu_0 = \frac{1}{|A|}\mathbbm{1}_Adx$, where $A$ is a rectangle with range $(1,2)$ on $x_1$ axis and $(-1,3)$ on $x_2$ axis. Then \[W_2(\mu_\theta,\mu_{\theta'})^2 = \frac{7}{3}[(\vartheta_1-\vartheta_1')^2+(\vartheta_2-\vartheta_2')^2].\] 

For further illustration, the following corollary, easily obtained by computing the relevant second moments from \cref{THM:WassmapDilationGeneral}, shows what one recovers for a dilation manifold when the generating measure is the indicator function of a domain suitably normalized.

\begin{corollary}\label{COR:Rectangle}
Let $A$ be a rectangle in $\R^m$ with endpoints $a_{i,1},a_{i,2}$ on the $i$--th coordinate axis, and let $d\mu_0=\frac{1}{|A|}\mathbbm{1}_{A}dx$.  Then if $\theta,\theta'\subset\R^m_+$ are dilation vectors, 
\[W_2(\mu_\theta,\mu_{\theta'})^2 = \frac13 \sum_{i=1}^m|\vartheta_i-\vartheta_i'|^2(a_{2,i}^2+a_{2,i}a_{1,i}+a_{1,i}^2).\]
 Consequently, \cref{ALG:Wassmap} recovers $\{S_A\theta_i\}_{i=1}^N$, where $S_A$ is the diagonal matrix whose diagonal entries are defined as 
\[(S_A)_{i,i}= \sqrt{\frac{1}{3}(a_{2,i}^2+a_{2,i}a_{1,i}+a_{1,i}^2)}.
\]
\end{corollary}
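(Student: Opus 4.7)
The plan is to derive this corollary as a direct specialization of \cref{LEM:Dilation} combined with the recovery conclusion of \cref{THM:WassmapDilationGeneral}. The generating measure $\mu_0 = \frac{1}{|A|}\mathbbm{1}_A\,dx$ is absolutely continuous and supported on the rectangle $A = \prod_{j=1}^m[a_{1,j}, a_{2,j}]$, so both results apply. All that remains is to evaluate the coordinate second moments $\int_{\R^m}|x_i|^2\,d\mu_0(x)$ and substitute them into the formula of \cref{LEM:Dilation}.

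First I would compute, for each fixed $i \in \{1,\ldots,m\}$, the integral $\int_{\R^m}|x_i|^2\,d\mu_0$ by Fubini. Since $\mu_0$ is the normalized indicator of a product rectangle, the integrals in the coordinates $j \neq i$ contribute factors $(a_{2,j}-a_{1,j})$ which cancel with the corresponding factors in $|A| = \prod_{j=1}^m(a_{2,j}-a_{1,j})$. This leaves
\[
\int_{\R^m}|x_i|^2\,d\mu_0(x) = \frac{1}{a_{2,i}-a_{1,i}}\int_{a_{1,i}}^{a_{2,i}} x_i^2\,dx_i = \frac{a_{2,i}^3 - a_{1,i}^3}{3(a_{2,i}-a_{1,i})} = \frac{1}{3}\bigl(a_{2,i}^2 + a_{2,i}a_{1,i} + a_{1,i}^2\bigr),
\]
using the factorization $a_{2,i}^3 - a_{1,i}^3 = (a_{2,i}-a_{1,i})(a_{2,i}^2 + a_{2,i}a_{1,i} + a_{1,i}^2)$.

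Substituting this expression into the conclusion of \cref{LEM:Dilation} immediately yields the claimed formula for $W_2(\mu_\theta,\mu_{\theta'})^2$. For the recovery statement, I would then invoke \cref{THM:WassmapDilationGeneral}, which tells us that \cref{ALG:Wassmap} (with embedding dimension $m$) recovers $\{S\theta_i\}_{i=1}^N$ up to rigid transformation, where $S = \textnormal{diag}(M_2^{1/2}(P_1\mu_0),\ldots,M_2^{1/2}(P_m\mu_0))$. Since $M_2(P_i\mu_0) = \int_{\R^m}|x_i|^2\,d\mu_0 = \frac{1}{3}(a_{2,i}^2+a_{2,i}a_{1,i}+a_{1,i}^2)$, the matrix $S$ coincides with $S_A$, completing the proof.

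There is no real obstacle here; the corollary is essentially a bookkeeping exercise applying earlier results to a concrete $\mu_0$. The only mild subtlety is ensuring the product structure of $A$ is used correctly so that the normalization by $|A|$ reduces the multidimensional integral to a one-dimensional one in the $i$-th coordinate, and remembering the algebraic identity for $a_{2,i}^3-a_{1,i}^3$ which produces the symmetric factor $a_{2,i}^2+a_{2,i}a_{1,i}+a_{1,i}^2$.
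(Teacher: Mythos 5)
Your proposal is correct and matches the paper's intended argument: the paper omits the proof, noting only that the corollary is "easily obtained by computing the relevant second moments from \cref{THM:WassmapDilationGeneral}," which is precisely the computation you carry out. The marginal second-moment calculation $\int_{a_{1,i}}^{a_{2,i}} x^2\,dx/(a_{2,i}-a_{1,i}) = \tfrac13(a_{2,i}^2+a_{2,i}a_{1,i}+a_{1,i}^2)$ and the subsequent substitution into \cref{LEM:Dilation} and \cref{THM:WassmapDilationGeneral} are exactly right.
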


Note that if the parameter set is a lattice in $\R^m$, i.e., $\Theta = \alpha_1\Z\times\cdots\times\alpha_m\Z$, then Functional Wassmap will recover the set $\alpha_1M_2^\frac12(P_1\mu_0)\Z\times\cdots\times\alpha_mM_2^\frac12(P_m\mu_0)\Z$ up to rigid transformation.

\subsection{Rotation manifolds}\label{SEC:Rotation}

We will show in subsequent experiments that the discrete Wassmap algorithm is capable of recovering the underlying circle governing a rotational manifold. However, at present, the authors do not have a proof analogous to the above results for this case.  Let a rotation manifold be defined as follows:

\[\mc{M}^{\textnormal{rot}}(\mu_0,\Theta) := \{\mu_0(R_\theta\cdot):R_\theta\in \textnormal{SO}(m)\}.\]

Consider the following: 

\begin{theorem}[{\cite[Theorem 1.22]{santambrogio2015optimal}}]\label{thm:KPexist}
	Suppose $\mu,\nu\in \mathbb{W}_2(\R^m)$ and $\mu$ gives no mass to $(d-1)$--surfaces of class $C^2$. Then there exists a unique optimal transport map $T$ from $\mu$ to $\nu$, and it is of the form $T=\nabla u$ for a convex function $u$.
\end{theorem}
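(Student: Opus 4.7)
The plan is to obtain $T$ as the gradient of a convex function by combining Kantorovich's relaxation with the structure theorem for cyclically monotone sets. First I would produce an optimal plan $\pi\in\Pi(\mu,\nu)$ for the quadratic cost $c(x,y)=|x-y|^2$ via the standard weak-$*$ compactness plus lower semicontinuity argument for \eqref{EQN:KP}; existence is already built into the framework of \cref{THM:ProbEquiv}. Next I would pass from optimality of $\pi$ to a geometric structural statement: the support of any optimizer for a continuous cost is $c$-cyclically monotone. Expanding $|x-y|^2=|x|^2-2\langle x,y\rangle+|y|^2$ shows that $c$-cyclical monotonicity of $\supp\pi$ is equivalent to classical cyclical monotonicity of $\supp\pi$ as a subset of $\R^m\times\R^m$, because the $|x|^2$ and $|y|^2$ pieces telescope in any finite cyclic sum.

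At this point I would invoke Rockafellar's theorem: every cyclically monotone subset of $\R^m\times\R^m$ is contained in the subdifferential graph $\{(x,y):y\in\partial u(x)\}$ of some proper lower semicontinuous convex function $u:\R^m\to\R\cup\{+\infty\}$. Hence $\supp\pi$ lies inside this multivalued relation. The remaining, and hardest, task is to upgrade $\partial u$ to a single-valued map on a set of full $\mu$-measure. This is exactly where the hypothesis on $\mu$ enters: by the Zajicek/Alberti structure theorem, the set of points at which a convex function on $\R^m$ fails to be differentiable is contained in a countable union of $C^2$ hypersurfaces. Since $\mu$ assigns zero mass to every such hypersurface, $u$ is differentiable $\mu$-a.e. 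On this full-measure set $\partial u(x)=\{\nabla u(x)\}$, so $\supp\pi$ is concentrated on the graph of $T:=\nabla u$, which means $\pi=(\mathrm{id}\times T)_\#\mu$ and in particular $T_\#\mu=\nu$.

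For uniqueness, suppose $T'=\nabla v$ is a second optimal map with $v$ convex. Any convex combination of the associated plans is again optimal, so the same Rockafellar argument forces its support to lie in the subdifferential graph of a single convex function; monotonicity of $\partial u$ and $\partial v$, combined with the fact that both $T$ and $T'$ push $\mu$ to $\nu$, then forces $\nabla u=\nabla v$ $\mu$-a.e., i.e.\ $T=T'$ $\mu$-a.e. The main obstacle is clearly the $\mu$-a.e. differentiability step: if $\mu$ were assumed absolutely continuous, differentiability would follow immediately from the Lebesgue-negligibility of the non-differentiability set of a convex function, but under the weaker regularity hypothesis stated here one really needs the finer structural result that this set is covered by countably many $C^2$ hypersurfaces.
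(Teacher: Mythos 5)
The paper does not prove this statement—it is imported verbatim as \cite[Theorem 1.22]{santambrogio2015optimal}—and your argument is precisely the standard Brenier--McCann proof given in that reference: optimal plan via compactness, $c$-cyclical monotonicity reducing to classical cyclical monotonicity for the quadratic cost, Rockafellar's theorem, and then the Zaj\'{i}\v{c}ek--Alberti covering of the non-differentiability set of a convex function by countably many $C^2$ hypersurfaces, which is exactly where the stated hypothesis on $\mu$ is used. Your proposal is correct and matches the source's approach, including the uniqueness argument via averaging two optimal plans.
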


A direct consequence of \cref{thm:KPexist} is that a rotation matrix $R_\theta$ is not the optimal transport map from a measure to its pushforward under $R_\theta$ as this is not the gradient of a convex function.  Consequently, exactly computing $W_2(\mu_0(R_\theta\cdot),\mu_0(R_{\theta'}\cdot))$ is nontrivial.

On the other hand, we can give an upper bound for the Wasserstein distance of a rotated version of a fixed measure with itself. Restrict to clockwise rotation in $\R^2$ by angle $\theta\in(0,2\pi)$, and let $R_\theta$ be the resulting rotation matrix. One can verify that
\begin{equation*}
	W_2(\mu_0,\mu_0(R_\theta\cdot))^2\leq \int_{\R^2}|R_\theta(x)-x|^2d\mu_0 = 2\sin\left(\frac{\theta}{2}\right)M_2(\mu_0).
\end{equation*}

\section{Discrete Wassmap: Algorithm and Theory}\label{SEC:DiscreteWassmap}

In imaging practice, one obtains discrete vectors rather than continuous distributions, so a practical version of \cref{ALG:Wassmap} must take this into account. To do this, one must consider how to form a probability measure from a given image. Given a two-dimensional (planar) or multidimensional (e.g., volumetric) image in pixel/voxel representation, that is $g = [g_1,\ldots,g_D]\in\R^D$ where $D$ is the total number of pixels or voxels, we will assign a discrete measure $P(g)\in \bb{W}_2(\R^m)$ by selecting a set of $D$ locations $x_n\in\R^m$, and assigning mass $g_n>0$ to a corresponding physical location $x_n$ and normalizing: 
\begin{equation}
    P(g) = \frac{1}{\|g\|_1}\sum_{n=1}^D g_n\delta_{x_n}, \label{eqn:image_to_discrete_measure}
\end{equation}where $\delta_{x_n}$ is a Dirac mass at location $x_n$. The locations $x_n$ are most conveniently assumed, at least initially, to lie on a regular grid in the ambient space $\R^m$.  
Given two images (with common ambient dimension $m$ but not necessarily the same $D$), the problem of computing the Wasserstein distance between $\mu_i = P(g_i)$ and $\mu_j = P(g_j)$ reduces to a discrete optimization problem for which many algorithms exist \cite{gerber2017multiscale,peyre2019computational}. 

Below we summarize the Discrete Wassmap algorithm which mimics the procedure described above. Note that we state the algorithm for image input, but one could equally well state it simply for discrete probability measures input in which case one simply skips the measure construction step.

\begin{algorithm}[h!]
 \caption{Discrete Wasserstein Isometric Mapping (Discrete Wassmap)}\label{ALG:DiscreteWassmap}
\begin{algorithmic}[1]

\STATE \textbf{Input: }{Image data $\{g_i\}_{i=1}^N\subset \R^D$; embedding dimension $d$
}

\STATE \textbf{Output: }{Low-dimensional embedding points $\{z_i\}_{i=1}^N\subset\R^d$}
\STATE {(Measure Construction): $\mu_i = P(g_i)$}
 \STATE {Compute pairwise Wasserstein distance matrix $W_{ij} = \edit{W_p}^2(\mu_i,\mu_j)$}
\edit{\STATE{(Optional) Form neighborhood graph $G$ using $W$, and set $W=\textnormal{APSP}(G)$}}
\STATE{$B = -\frac12 HWH$, where  ($H=I-\frac{1}{N}\mathbbm{1}_N)$}

\STATE {(Truncated eigendecomposition): $B_d=V_d\Lambda_dV_d^T$}

\STATE{$z_i = (V_d\Lambda_d^\frac{1}{2})(i,:) $} 

\STATE \textbf{Return:}{$\{z_i\}$}
\end{algorithmic}
\end{algorithm}

\subsection{Transferring Wasserstein computations to arbitrary measures}

An important consideration for the theory of exactness of Discrete Wassmap is to understand how (or even if) any of the Wasserstein distance computations in \cref{SEC:Theory} carry over to the setting of discrete measures. For instance, if one translates a discrete measure, is the Wasserstein distance the same as in the absolutely continuous case (the magnitude of the translation)?  Here we show that this is the case for a wide variety of discrete measures and transformations of them. We will state our results in terms of the pushforward operators defining the transformation of a base measure.

The following theorem provides a bridge which allows one to transfer results on recovery of Wasserstein image manifold parametrizations from manifolds generated by absolutely continuous measures to those generated by arbitrary measures. Note that there is no requirement on the generating measure $\mu_0$ aside from the fact that it lies in $\mathbb{W}_p$; it may have a mix of continuous and discrete spectra, and need not have compact support.  \Cref{THM:DiscreteContinuous} shows that if Wasserstein distances between absolutely continuous measures generated by a given parameter set depend only on the parameters, then the Wasserstein distances between arbitrary measures likewise depend only on the parameters. Thus, if Functional Wassmap recovers a parameter set for absolutely continuous generating measure $\mu_0$, then Discrete Wassmap recovers the manifold generated analogously from a discrete measure $\mu_0$. Additionally, since \cref{THM:DiscreteContinuous} holds for arbitrary measures, we find that if Functional Wassmap recovers a parameter set for absolutely continuous generating measure, then in fact it also recovers the parameter set for arbitrary generating measure. \edit{Below, we let $g_\sigma(x)=\frac{1}{(\sqrt{2\pi}\sigma)^m} e^{\frac{-|x|^2}{2\sigma^2}}$ be the multivariate Gaussian kernel on $\R^m$, and $\ast$ represents convolution of measures.}

\begin{theorem}\label{THM:DiscreteContinuous}
Let $p\in[1,\infty)$. Suppose that for all absolutely continuous $\mu_0\in\edit{\mathbb{W}_p}(\R^m)$, $\mathcal{M}(\mu_0,\Theta)=\{T_{\theta\sharp}\mu_0:\theta\in\Theta\}$ is a smooth submanifold of $\edit{\mathbb{W}_p}(\R^m)$, that $T_\theta$ is Lipschitz for all $\theta$, and that for all $\theta,\theta'\in\Theta$ and all absolutely continuous $\mu_0$, $W_p(T_{\theta\sharp}\mu_0,T_{\theta'\sharp}\mu_{0}) = f(\theta,\theta'\edit{,\mu_0})$ for some function $f$ dependent only upon $\theta, \theta',$ \edit{and $\mu_0$, for which $\lim_{\sigma\to0}f(\theta,\theta',\mu_0\ast g_\sigma)=f(\theta,\theta',\mu_0)$}.  Then for any $\nu_0\in\edit{\mathbb{W}_p}(\R^m)$, $\edit{W_p}(T_{\theta\sharp}\nu_0,T_{\theta'\sharp}\nu_{0})= f(\theta,\theta'\edit{,\nu_0})$.    
\end{theorem}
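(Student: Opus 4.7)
The plan is to use an approximation argument: replace $\nu_0$ by a sequence of absolutely continuous measures $\nu_\varepsilon$ converging to $\nu_0$ in $W_2$, invoke the hypothesis to get $W_2(T_{\theta\sharp}\nu_\varepsilon, T_{\theta'\sharp}\nu_\varepsilon) = f(\theta,\theta')$ for every $\varepsilon$, and pass to the limit using the fact that pushforward under a Lipschitz map is a Lipschitz operation on $\mathbb{W}_2$. Since the right-hand side $f(\theta,\theta')$ does not depend on the generating measure, the limit must equal it as well.

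\emph{Step 1: approximating sequence.} Fix a smooth, compactly supported probability density $\phi$ on $\R^m$, set $\phi_\varepsilon(x)=\varepsilon^{-m}\phi(x/\varepsilon)$, and define $\nu_\varepsilon:=\nu_0 * \phi_\varepsilon$. Each $\nu_\varepsilon$ is absolutely continuous and, because $\phi_\varepsilon$ has finite second moment, $\nu_\varepsilon\in\mathbb{W}_2(\R^m)$. Using the coupling that sends $x\sim\nu_0$ and an independent $y\sim\phi_\varepsilon$ to $(x,x+y)$, one obtains $W_2(\nu_\varepsilon,\nu_0)^2\le \int_{\R^m}|y|^2\phi_\varepsilon(y)\,dy = \varepsilon^2 \int_{\R^m}|y|^2\phi(y)\,dy \to 0$ as $\varepsilon\to 0$.

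\emph{Step 2: Lipschitz continuity of pushforward.} The key elementary lemma is that if $T:\R^m\to\R^m$ is $L$-Lipschitz, then for every $\mu,\nu\in\mathbb{W}_2(\R^m)$,
\[
W_2(T_\sharp\mu, T_\sharp\nu)\le L\,W_2(\mu,\nu).
\]
This is proved by taking an optimal coupling $\pi\in\Pi(\mu,\nu)$ and observing that $(T\times T)_\sharp\pi\in\Pi(T_\sharp\mu,T_\sharp\nu)$ has cost bounded by $L^2\int|x-y|^2\,d\pi=L^2 W_2(\mu,\nu)^2$. The Lipschitz property of $T_\theta$ also guarantees $T_{\theta\sharp}\nu_0\in\mathbb{W}_2$ via $|T_\theta(x)|\le |T_\theta(0)|+L_\theta|x|$, so all Wasserstein distances in sight are defined and finite.

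\emph{Step 3: pass to the limit.} Applying the hypothesis to each absolutely continuous $\nu_\varepsilon$ yields $W_2(T_{\theta\sharp}\nu_\varepsilon, T_{\theta'\sharp}\nu_\varepsilon)=f(\theta,\theta')$. Combining the triangle inequality with Step 2,
\[
\bigl|W_2(T_{\theta\sharp}\nu_\varepsilon,T_{\theta'\sharp}\nu_\varepsilon)-W_2(T_{\theta\sharp}\nu_0,T_{\theta'\sharp}\nu_0)\bigr|
\le (L_\theta+L_{\theta'})\,W_2(\nu_\varepsilon,\nu_0)\xrightarrow{\varepsilon\to 0}0,
\]
which forces $W_2(T_{\theta\sharp}\nu_0,T_{\theta'\sharp}\nu_0)=f(\theta,\theta')$, as desired.

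The main obstacle, though largely standard, is verifying convergence of the mollified sequence in the $W_2$ topology (not merely weakly) and confirming that second moments stay uniformly controlled; both are handled by the explicit coupling above, which relies on independence between the base measure and the mollifier. Everything else is bookkeeping: the Lipschitz hypothesis on $T_\theta$ is exactly what turns the weak/approximation argument into a quantitative one in $\mathbb{W}_2$, and the assumption that $f$ depends only on $\theta,\theta'$ is what lets us identify the common limit.
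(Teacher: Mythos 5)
Your argument is correct, and it reaches the conclusion by a genuinely different and more economical route than the paper. The paper's proof runs through \cref{LEM:Discrete}: it mollifies, invokes the stability result $W_2(\rho\ast g_\sigma,\rho'\ast g_\sigma)\to W_2(\rho,\rho')$ for the convolved \emph{pushforwards}, and then must reconcile the fact that pushforward and convolution do not commute, i.e.\ it separately controls $W_2\bigl(T_{\theta\sharp}(\mu\ast g_\sigma),\,T_{\theta\sharp}\mu\ast g_\sigma\bigr)$ by verifying weak convergence together with convergence of second moments (four items, each requiring the Lipschitz bound in the moment estimates, plus an appeal to two lemmas from Santambrogio). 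You sidestep that entire commutation issue by using the contraction property $W_2(T_\sharp\mu,T_\sharp\nu)\le L\,W_2(\mu,\nu)$ for $L$-Lipschitz $T$, proved correctly via the coupling $(T\times T)_\sharp\pi$, combined with the quadrilateral inequality $\lvert W_2(a,b)-W_2(c,d)\rvert\le W_2(a,c)+W_2(b,d)$ and the explicit coupling bound $W_2(\nu_0\ast\phi_\varepsilon,\nu_0)\le\varepsilon\bigl(\int|y|^2\phi\,dy\bigr)^{1/2}$. The hypotheses are used identically in both arguments --- absolute continuity of the mollification to invoke $f(\theta,\theta')$, and the Lipschitz assumption on $T_\theta$ to pass to the limit --- but your version is quantitative (it yields an explicit $O(\varepsilon)$ rate for the discrepancy) and avoids the weak-convergence machinery entirely; the paper's route, by working at the level of weak convergence plus moments, is the one you would need if the maps were only continuous with controlled growth rather than globally Lipschitz, but under the stated hypotheses your argument is strictly simpler.
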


The crux of the proof of this theorem is the lemma below. We take $\nu_\sigma\rightharpoonup\nu$ to mean weak convergence of measures, which by the Portmanteau Theorem is equivalent to the statement $\nu_\sigma(A)\to\nu(A)$ for all continuity sets $A$ of $\nu$ (i.e., $\nu(\partial A)=0$).

\begin{lemma}\label{LEM:Discrete}
Let \edit{$\mu\in\mathbb{W}_p(\R^m)$, $p\in [1,\infty)$}. Suppose that $T_\theta,T_{\theta'}:\R^m\to\R^m$ are Lipschitz with Lipschitz constant at most $L$. Then \[\edit{W_p}(T_{\theta\sharp}\mu,T_{\theta'\sharp}\mu)=\lim\limits_{\sigma\rightarrow 0}\edit{W_p}(T_{\theta\sharp}(\mu*g_\sigma), T_{\theta'\sharp}(\mu*g_\sigma)).\] 
\end{lemma}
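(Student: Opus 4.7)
The plan is to reduce the problem to two standard facts about $W_2$: (i) convolution with a Gaussian approximate identity converges in $W_2$, and (ii) pushforward by a Lipschitz map is Lipschitz in $W_2$ with constant controlled by the Lipschitz constant of the map.

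First I would establish that $\mu \ast g_\sigma \to \mu$ in $W_2(\R^m)$ as $\sigma \to 0$. Because $g_\sigma$ is an approximate identity, standard Portmanteau-type arguments give $\mu \ast g_\sigma \rightharpoonup \mu$ weakly. Moreover, since $g_\sigma$ is the law of a Gaussian $Z_\sigma$ independent of a random variable $X \sim \mu$, the measure $\mu \ast g_\sigma$ is the law of $X + Z_\sigma$, and
\[
M_2(\mu \ast g_\sigma) = M_2(\mu) + m\sigma^2 \longrightarrow M_2(\mu) \quad \text{as } \sigma \to 0.
\]
Weak convergence together with convergence of second moments is equivalent to convergence in $W_2$ (this is the standard characterization of $W_2$-convergence on $\mathbb{W}_2(\R^m)$), so $W_2(\mu, \mu \ast g_\sigma) \to 0$. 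Alternatively, one can simply use the coupling $(X, X + Z_\sigma)$ directly to obtain $W_2(\mu, \mu \ast g_\sigma)^2 \le \E |Z_\sigma|^2 = m\sigma^2 \to 0$, which is cleaner and avoids the characterization theorem.

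Next I would use the Lipschitz pushforward bound: if $T:\R^m \to \R^m$ is $L$-Lipschitz and $\alpha, \beta \in \mathbb{W}_2(\R^m)$, then for any coupling $\pi \in \Pi(\alpha,\beta)$, the measure $(T \times T)_\sharp \pi$ is a coupling of $T_\sharp \alpha$ and $T_\sharp \beta$, and
\[
\int |T(x) - T(y)|^2 \, d\pi(x,y) \le L^2 \int |x - y|^2 \, d\pi(x,y).
\]
Infimizing over $\pi$ gives $W_2(T_\sharp \alpha, T_\sharp \beta) \le L \cdot W_2(\alpha, \beta)$.

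Combining these with the triangle inequality for $W_2$ applied to the four measures $T_{\theta\sharp}\mu$, $T_{\theta\sharp}(\mu \ast g_\sigma)$, $T_{\theta'\sharp}(\mu \ast g_\sigma)$, $T_{\theta'\sharp}\mu$ yields
\[
\bigl| W_2(T_{\theta\sharp}\mu, T_{\theta'\sharp}\mu) - W_2(T_{\theta\sharp}(\mu \ast g_\sigma), T_{\theta'\sharp}(\mu \ast g_\sigma)) \bigr| \le 2L \cdot W_2(\mu, \mu \ast g_\sigma),
\]
which tends to $0$ as $\sigma \to 0$. The main (minor) obstacle is ensuring that $\mu \ast g_\sigma$ genuinely lives in $\mathbb{W}_2(\R^m)$ so that $W_2$ distances are finite — but this is immediate from the second-moment computation above. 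Everything else is a clean application of the Lipschitz pushforward estimate and the Gaussian coupling bound.
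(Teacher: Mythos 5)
Your proof is correct, and it takes a genuinely different and more economical route than the paper's. The paper first invokes \cite[Lemma 5.2]{santambrogio2015optimal} to write $W_2(T_{\theta\sharp}\mu,T_{\theta'\sharp}\mu)$ as the limit of $W_2(T_{\theta\sharp}\mu\ast g_\sigma,\,T_{\theta'\sharp}\mu\ast g_\sigma)$ (convolution \emph{after} pushforward), then controls the discrepancy with the quantities $W_2(T_{\theta\sharp}(\mu\ast g_\sigma),\,T_{\theta'\sharp}(\mu\ast g_\sigma))$ by showing four separate $W_2$--convergences, each established via the characterization ``weak convergence plus convergence of second moments'' (\cite[Lemma 5.11]{santambrogio2015optimal}) together with explicit moment estimates that use the Lipschitz hypothesis. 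You bypass all of that: the explicit coupling $(X,X+Z_\sigma)$ gives $W_2(\mu,\mu\ast g_\sigma)^2\le m\sigma^2$ directly, the pushforward contraction $W_2(T_\sharp\alpha,T_\sharp\beta)\le L\,W_2(\alpha,\beta)$ (via the coupling $(T\times T)_\sharp\pi$) transfers this to the pushforward measures, and one application of the reverse triangle inequality finishes the argument with the quantitative bound $2L\sqrt{m}\,\sigma$. Both proofs use the Lipschitz hypothesis essentially, but yours avoids the weak-convergence machinery entirely, needs no intermediate object of the form $T_{\theta\sharp}\mu\ast g_\sigma$, and yields an explicit rate of convergence that the paper's qualitative argument does not. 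The only points worth making explicit in a final write-up are that $\mu\ast g_\sigma\in\mathbb{W}_2(\R^m)$ (which you do address) and that $T_{\theta\sharp}\mu\in\mathbb{W}_2(\R^m)$, which follows from $|T_\theta(x)|\le|T_\theta(0)|+L|x|$; neither is a gap.
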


\begin{proof}
By \cite[Lemma 5.2]{santambrogio2015optimal}, $\edit{W_p}(T_{\theta\sharp}\mu,T_{\theta'\sharp}\mu)=\lim\limits_{\sigma\rightarrow 0}\edit{W_p}(T_{\theta\sharp}\mu*g_\sigma, T_{\theta'\sharp}\mu*g_\sigma)$. While it is not true in general that $\edit{W_p}(T_{\theta\sharp}\mu\ast g_\sigma,T_{\theta'\sharp}\mu\ast g_\sigma) = \edit{W_p}(T_{\theta\sharp}(\mu*g_\sigma), T_{\theta'\sharp}(\mu*g_\sigma))$, we will show that the limits of these two expressions is the same. Considering their difference and utilizing the \edit{reverse triangle inequality then triangle inequality} below, we have
\begin{multline*}
|\edit{W_p}(T_{\theta\sharp}\mu*g_\sigma, T_{\theta'\sharp}\mu*g_\sigma)- \edit{W_p}(T_{\theta\sharp}(\mu*g_\sigma),T_{\theta'\sharp}(\mu*g_\sigma))| \\
\edit{\leq |W_p(T_{\theta\sharp}\mu*g_\sigma, T_{\theta'\sharp}\mu*g_\sigma)- W_p(T_{\theta'\sharp}\mu*g_\sigma, T_{\theta\sharp}(\mu*g_\sigma))|} \\ \edit{ \qquad + |W_p(T_{\theta'\sharp}\mu*g_\sigma, T_{\theta\sharp}(\mu*g_\sigma))-W_p(T_{\theta\sharp}(\mu*g_\sigma),T_{\theta'\sharp}(\mu*g_\sigma))|}\\
\leq \edit{W_p}(T_{\theta\sharp}\mu*g_\sigma,T_{\theta\sharp}(\mu*g_\sigma)) + \edit{W_p}(T_{\theta'\sharp}(\mu*g_\sigma), T_{\theta'\sharp}\mu*g_\sigma) \\
\leq \edit{W_p}(T_{\theta\sharp}\mu,T_{\theta\sharp}(\mu*g_\sigma))+\edit{W_p}(T_{\theta\sharp}\mu,T_{\theta\sharp}\mu*g_\sigma) \\ + \edit{W_p}(T_{\theta'\sharp}\mu,T_{\theta'\sharp}(\mu*g_\sigma))+\edit{W_p}(T_{\theta'\sharp}\mu,T_{\theta'\sharp}\mu*g_\sigma).
\end{multline*}

We claim that $\lim\limits_{\sigma\rightarrow0}\edit{W_p}(T_{\theta\sharp}\mu,T_{\theta\sharp}(\mu*g_\sigma))=\lim\limits_{\sigma\rightarrow0}\edit{W_p}(T_{\theta\sharp}\mu,T_{\theta\sharp}\mu*g_\sigma)=0$ for any $\theta$. By \cite[Lemma 5.11]{santambrogio2015optimal}, it is sufficient to prove the following:  
\begin{enumerate}
\item $T_{\theta\sharp}(\mu*g_\sigma)\rightharpoonup T_{\theta\sharp}\mu$
\item $T_{\theta\sharp}\mu*g_\sigma \rightharpoonup T_{\theta\sharp}\mu$.
\item \edit{$\int|x|^pdT_{\theta\sharp}(\mu*g_\sigma)\rightarrow \int |x|^pdT_{\theta\sharp}\mu$} 
\item \edit{$\int|x|^pdT_{\theta\sharp}\mu*g_\sigma\rightarrow \int |x|^pdT_{\theta\sharp}\mu$.}
\end{enumerate}
Here and subsequently all integrals are over $\R^m$. \edit{For 3) and 4), we will first prove the statements for integer $p$ and show that it can be generalized to any $p\in [1,\infty)$.}

Proof of 1) Suppose $A$ is a \edit{measurable} set of $\mu$, then by definition $T^{-1}(A)$ is a continuity set of $T_{\theta\sharp}\mu$.	Then,
\[T_{\theta\sharp}(\mu*g_\sigma)(A) = \mu*g_\sigma(T^{-1}(A)) \to \mu(T^{-1}(A)) = T_{\theta\sharp}\mu(A),\]
where convergence follows from the fact that $\mu\ast g_\sigma\rightharpoonup \mu$ for any $\mu$.
	
Item 2 is well-known and follows from a simple computation, so is omitted. 

Proof of 3) First, we note that by direct computation,
\[\int|x|^p dT_{\theta\sharp}(\mu\ast g_\sigma) = \int\int|T(x+y)|^pg_\sigma(y)dy d\mu(x)\] and
\[\int|x|^pdT_{\theta\sharp}\mu = \int|T(x)|^pd\mu(x) = \int\int|T(x)|^pg_\sigma(y)dy d\mu(x),\]
where the second equality follows from $g_\sigma$ being a probability density function.

With these observations in hand, \edit{if p is an integer,} we have that the difference \edit{$|\int |x|^p d(T_{\theta\sharp}(\mu*g_\sigma)-T_{\theta\sharp}\mu)|$ is bounded as follows:
\begin{align*}
		& 		\int\int\big||T(x+y)|^p-|T(x)|^p\big|g_\sigma(y)dy d\mu(x)\\
		 & = \int\int(|T(x+y)-T(x)|)\sum\limits_{i=0}^{p-1} |T(x+y)|^i|T(x)|^{p-1-i} g_\sigma(y)dy d\mu(x)\\
         &  \leq \int\int |y|\sum\limits_{i=0}^{p-1} |T(x+y)-T(0)+T(0)|^i|T(x)-T(0)+T(0)|^{p-1-i} g_\sigma(y)dy d\mu(x)\\
		  & \leq L^{p-1}\int\int 
		  \bigg(\sum\limits_{i=0}^{p-1} \big(|x+y|+T(0)\big)^i\big(|x|+T(0)\big)^{p-1-i}\bigg)|y|g_\sigma(y)dy d\mu(x).
	\end{align*}
}




The \edit{second inequality follows from utilizing the fact that $T$ is Lipschitz. The final integral can be bounded by a sum of integrals involving products of $|x|^{p-1}$ and $|y|^\alpha$ for various integer $\alpha\in[1,p]$. } That this quantity goes to $0$ as $\sigma\to0$ follows from the facts \edit{that $\mu$ has finite $p-1$ moment} and that any Gaussian moment tends to $0$. Indeed, by substitution,
\[\int_{\R^m}|y|^pg_\sigma(y)dy = \sigma^p\int_{\R^m}|y|^pg_1(y)dy.\]
The integral above is a constant depending only upon $p$ and $m$, so the conclusion follows by application of the Dominated Convergence Theorem.

Proof of 4) By similar argument, but noting that
\[\int|x|^pd(T_{\theta\sharp}\mu\ast g_\sigma) = \int\int|T(x)+y|^pg_\sigma(y)dy d\mu(x),\]
we see that \edit{for integer $p$,
\begin{align*}
		\bigg|\int |x|^p d(T_{\theta\sharp}\mu*g_\sigma-T_{\theta\sharp}\mu)\bigg|
		& \leq \int\int\big||T(x)+y|^p-|T(x)|^p\big|g_\sigma(y)dy d\mu(x)\\
		& \leq \int\int\Big(\big||T(x)|+|y|\big|^p-|T(x)|^p\Big)g_\sigma(y)dy d\mu(x)\\
		& = \int\int \Big(\sum\limits_{i=0}^{p-1}{p\choose i}|T(x)|^i|y|^{p-i}\Big) g_\sigma(y)dyd\mu(x)\\
		& \leq \int\int \Big(\sum\limits_{i=0}^{p-1}{p\choose i}L^i\big(|x|+T(0)\big)^i|y|^{p-i}\Big) g_\sigma(y)dyd\mu(x).
	\end{align*}
}
The \edit{finite} moment of $g_\sigma$ tends to $0$ as before.

\edit{For non-integer $p$, the conclusion follows from the fact that $|a^p-b^p|\leq |a^{\lfloor p \rfloor}-b^{\lfloor p \rfloor}|+|a^{\lceil p \rceil}-b^{\lceil p \rceil}|$ and $\mu\in\mathbb{W}_p$ has finite $\lfloor p\rfloor$-th moment.}
\end{proof}

With this Lemma we are now in a position to finish the proof of the main theorem of this section.

\begin{proof}[Proof of \cref{THM:DiscreteContinuous}]
Let $g_\sigma$ be the multivariate Gaussian with variance $\sigma$ as before. Then by \cref{LEM:Discrete}, we have
\begin{align*} \edit{W_p}(T_{\theta\sharp}\nu_0,T_{\theta'\sharp}\nu_0) = \lim_{\sigma\to0} \edit{W_p}(T_{\theta\sharp}(\nu_0\ast g_\sigma),T_{\theta'\sharp}(\nu_0\ast g_\sigma)) & = \edit{\lim_{\sigma\to0} f(\theta,\theta',\nu_0\ast g_\sigma)} \\ & \edit{= f(\theta,\theta',\nu_0).} \end{align*}
\end{proof}

By combining the above results, we readily see that Discrete Wassmap recovers the translation set for translation of discrete measures, and the scaled dilation set for dilation image manifolds.

\begin{corollary}\label{COR:TransDil}
Suppose $\mu_0\in\mathbb{W}_2(\R^m)$ is discrete. Then given $\{\theta_i\}_{i=1}^N\subset\Theta\subset\R^m$ and corresponding measures $\{\mu_{\theta_i}\}_{i=1}^N\subset\mathcal{M}^{\text{trans}}(\mu_0,\Theta)$, the Discrete Wassmap algorithm (\cref{ALG:DiscreteWassmap}) with embedding dimension $m$ recovers $\{\theta_i\}_{i=1}^N$ up to rigid transformation.

If $\mu_0\in\mathbb{W}_2(\R^m)$ is discrete. Then given $\{\theta_i\}_{i=1}^N\subset\Theta\subset\R^m_+$ and corresponding measures $\{\mu_{\theta_i}\}_{i=1}^N\subset\mathcal{M}^{\textnormal{dil}}(\mu_0,\Theta)$, then the Discrete Wassmap algorithm with embedding dimension $m$ recovers $\{S\theta_i\}_{i=1}^N$ up to rigid transformation, where $S$ is as in \cref{THM:WassmapDilationGeneral}.
\end{corollary}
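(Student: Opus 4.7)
Both halves of the corollary admit the same two-step structure: first establish the relevant pairwise Wasserstein distance identity when $\mu_0$ is discrete, then invoke MDS via \cref{COR:WassmapRecovery} to conclude exact recovery of the configuration up to rigid transformation.

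For the translation part, the pushforward $T_\theta(x)=x-\theta$ is an isometry and in particular Lipschitz, and \cref{LEM:Translation} gives $W_2(T_{\theta\sharp}\mu,T_{\theta'\sharp}\mu)=|\theta-\theta'|$ for every absolutely continuous $\mu$. Since the right-hand side does not involve $\mu$, the hypotheses of \cref{THM:DiscreteContinuous} are satisfied with $f(\theta,\theta')=|\theta-\theta'|$, and that theorem transfers the identity to the fixed discrete $\mu_0$. The Wasserstein distance matrix assembled by \cref{ALG:DiscreteWassmap} is then a Euclidean distance matrix with point configuration $\{\theta_i\}$, and \cref{COR:WassmapRecovery} yields recovery up to rigid transformation.

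The dilation part is the genuinely nontrivial step. Here the pushforward $T_\theta(x)=D_\theta^{-1}x$ is linear with Lipschitz constant $\max_i \vartheta_i$, but \cref{LEM:Dilation} gives
\[W_2(T_{\theta\sharp}\mu,T_{\theta'\sharp}\mu)^2=\sum_{i=1}^m|\vartheta_i-\vartheta'_i|^2\int_{\R^m}|x_i|^2\, d\mu,\]
whose right-hand side depends on $\mu$, so \cref{THM:DiscreteContinuous} cannot be quoted as a black box. The plan is to imitate its proof for the fixed discrete $\mu_0$ by setting $\mu_0^\sigma:=\mu_0\ast g_\sigma$ (absolutely continuous for every $\sigma>0$), applying \cref{LEM:Dilation} to $\mu_0^\sigma$, and then letting $\sigma\to 0^+$. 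The left-hand side converges to $W_2(T_{\theta\sharp}\mu_0,T_{\theta'\sharp}\mu_0)^2$ by \cref{LEM:Discrete}, and the right-hand side converges because each marginal second moment $\int|x_i|^2\, d\mu_0^\sigma$ converges to $\int|x_i|^2\, d\mu_0$ by essentially the computation in item 4 of the proof of \cref{LEM:Discrete} (specialized to the coordinate map $x\mapsto x_i$ and relying on the vanishing of all Gaussian moments as $\sigma\to 0$). This yields $W_2(T_{\theta\sharp}\mu_0,T_{\theta'\sharp}\mu_0)=|S\theta-S\theta'|$ with $S$ as in \cref{THM:WassmapDilationGeneral}, so the discrete Wasserstein distance matrix is Euclidean with configuration $\{S\theta_i\}$, and \cref{COR:WassmapRecovery} finishes the argument.

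The main obstacle is precisely the $\mu_0$-dependence of the dilation distance formula, which prevents a direct invocation of \cref{THM:DiscreteContinuous}. The workaround, specializing the Gaussian-mollification argument of \cref{LEM:Discrete} and tracking coordinate-wise second moments through the limit, is straightforward but is the only step that is not an immediate quotation of earlier results.
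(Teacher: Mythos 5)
Your proof is correct, and for the translation half it coincides with the paper's (the paper's entire proof is ``combine \cref{THM:DiscreteContinuous} with \cref{THM:WassmapTranslation,THM:WassmapDilationGeneral}''). For the dilation half you have, rightly, refused to quote \cref{THM:DiscreteContinuous} as a black box: its hypothesis requires $W_2(T_{\theta\sharp}\mu_0,T_{\theta'\sharp}\mu_0)=f(\theta,\theta')$ with $f$ independent of the absolutely continuous $\mu_0$, whereas \cref{LEM:Dilation} gives a formula involving the marginal second moments $M_2(P_i\mu_0)$, which vary with $\mu_0$; the paper's one-line citation silently elides this. Your patch --- run the mollification argument of \cref{LEM:Discrete} directly on the fixed discrete $\mu_0$, apply \cref{LEM:Dilation} to $\mu_0\ast g_\sigma$, and note that $M_2\bigl(P_i(\mu_0\ast g_\sigma)\bigr)=M_2(P_i\mu_0)+\sigma^2\to M_2(P_i\mu_0)$ --- is exactly the right repair, and is legitimate since $T_\theta=D_\theta^{-1}$ is linear, hence Lipschitz, so \cref{LEM:Discrete} controls the left-hand side of the limit. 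In effect you have re-proved \cref{THM:DiscreteContinuous} in the slightly more general form where $f$ may depend on $\mu_0$ but does so continuously under Gaussian mollification; this is a strictly more careful argument than the paper's and closes a small gap rather than introducing one.
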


\begin{proof}
Combine \cref{THM:DiscreteContinuous} with \cref{THM:WassmapTranslation,THM:WassmapDilationGeneral}.
\end{proof}

\begin{remark}
Similarly, if $\mu_0 \in\mathbb{W}_2(\R^m)$ is arbitrary, then \cref{THM:DiscreteContinuous,THM:WassmapTranslation,THM:WassmapDilationGeneral} imply that the Functional Wassmap algorithm (\cref{ALG:Wassmap}) recovers the underlying translation or scaled dilation sets, respectively. \edit{Additionally, the conclusion of \cref{COR:TransDil} holds for general $p\in(1,\infty)$ in the case of translations or isotropic dilations.}
\end{remark}

\section{Experiments}\label{SEC:Experiments}

To demonstrate our theoretical results, we provide several experiments\footnote{Code for this work may be found at \url{https://github.com/Wassmap/wassmap}.} using both synthetically generated two-dimensional image data and the standard MNIST digits dataset \cite{lecun1998mnist}.  For each synthetic experiment, a fixed absolutely continuous base measure $\mu_0 \in \bb{W}_2(\R^2)$ with density $f_0(x)$ is selected, then a manifold $\mc{M}(\mu_0,\Theta)$ is sampled by applying the parametric transformation $\mc{T}_\theta$ to $\mu_0$ for a finite number of $\theta$ values $\{\theta_1,\ldots,\theta_N\}\subset\Theta$, resulting in the measures $\mu_{\theta_i}$ and corresponding densities $f_{\theta_i}$.  These (continuum) images are subsequently discretized by performing a spatial sampling, selecting $\{x_1,\ldots,x_D\}\subset\R^2$, evaluating each density $f_{\theta_i}(x)$ at these points, then forming the discrete measure \eqref{eqn:image_to_discrete_measure}.  

\edit{Comparisons are shown to Euclidean MDS, Isomap, and Diffusion Maps. MDS and Isomap are the nearest, most faithful comparison to our method as they are also global algorithms, but the other methods mentioned are local methods, which we employ for a more comprehensive comparison.  For Isomap, the $k$NN graph is used for geodesic estimation, with $k$ tuned to give the best visual result. For Diffusion Map embeddings, we employ the standard Gaussian kernel with the automatic epsilon selection algorithm described in \cite{berry2015nonparametric}. Our experiments use the Wassmap variant without the graph geodesic computation step with the exception of the MNIST experiment in \cref{SEC:MNIST} (Figure \ref{FIG:MNIST01}). For further illustration of Wassmap with and without use of geodesics, see the supplemental material. We use Euclidean distances in all methods except for Wassmap.}  Note that methods other than Wassmap assume a `pixel' representation of images; that is, each image is treated as an element of $\R^D$ for some fixed $D$.  One can obtain such a representation by following the steps outlined above but keeping points of zero density (for more discussion, see the supplemental material and code). In all figures, points in the original parameter set are color coded, and the corresponding points in the embedding depend on the initial point color in a one-to-one correspondence. Thus one can see where initial parameter points end up in each embedding. 

\subsection{Translation manifold}
In this set of experiments, we take the base measure $\mu_0$ to be the indicator function of a disc of radius $1$, that is $d\mu_0 = \frac{1}{\pi}\mathbbm{1}_{D}(x)dx$. For a given translation set $\Theta\subset\R^2$, the translation manifold is then generated via \eqref{eqn:trans_manifold_def}.  We consider two translation sets: $\Theta_1 = [-1,1]^2$ and $\Theta_2 = [-10,-5]\times[-2.5,2.5]\cup [5,10]\times[-2.5,2.5]$.   

\begin{figure}[h!]
    \centering
    \includegraphics[width=0.9\textwidth]{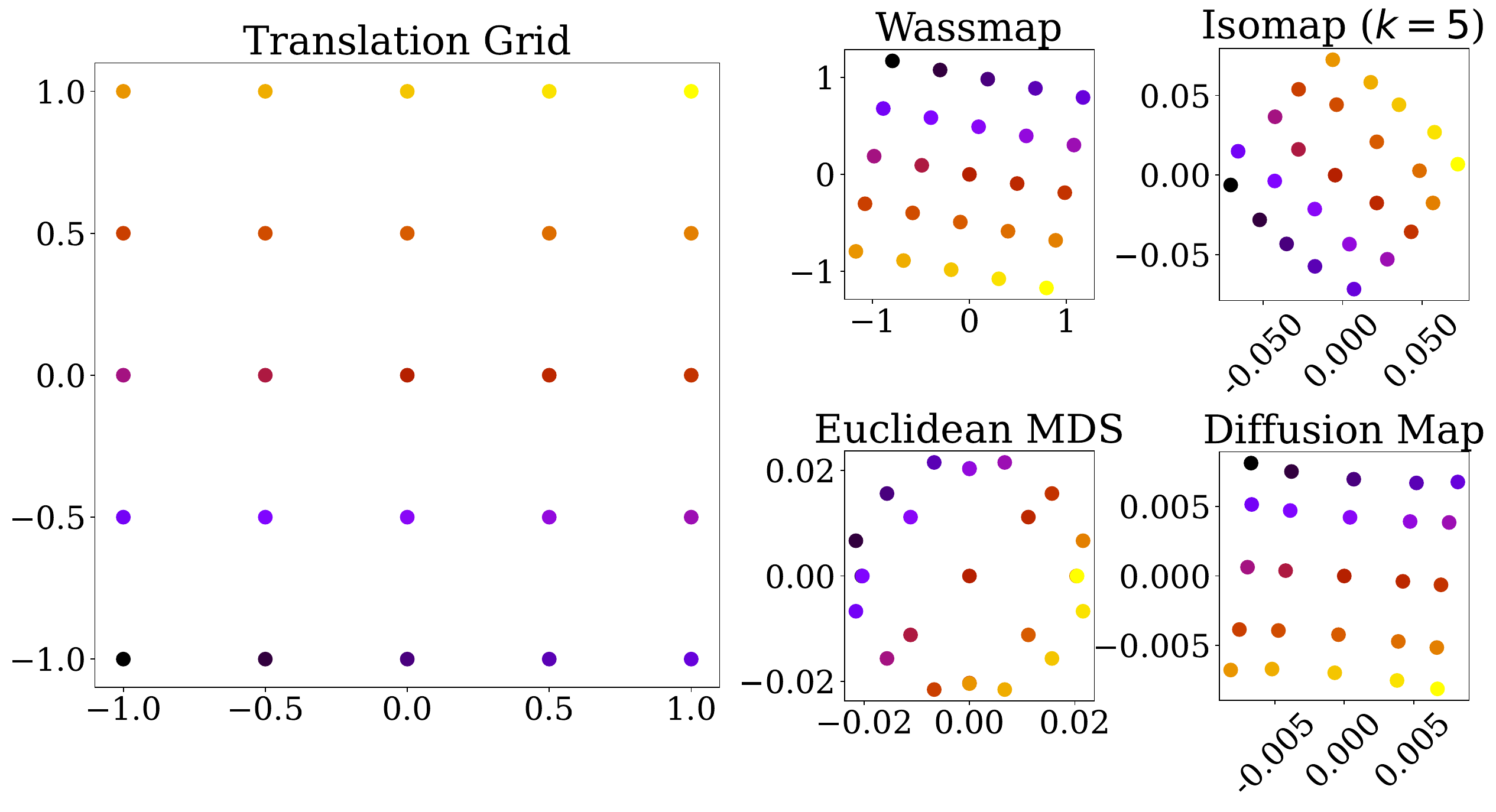}
    \caption{\edit{Translation manifold generated by the characteristic function of the unit disk with parameter set $\Theta_1 = [-1,1]^2$. We consider a uniform $5\times 5$ grid in the parameter space to generate $\{\theta_i\}$.  Shown are the original translation grid, the Wassmap, Isomap, Euclidean MDS,  and Diffusion Map embeddings.  For Isomap, the number of neighbors $(k=5)$ that resulted in the best embedding was chosen.}}
    \label{FIG:Translation}
\end{figure}


\begin{figure}[h!]
    \centering
    \includegraphics[width=0.9\textwidth]{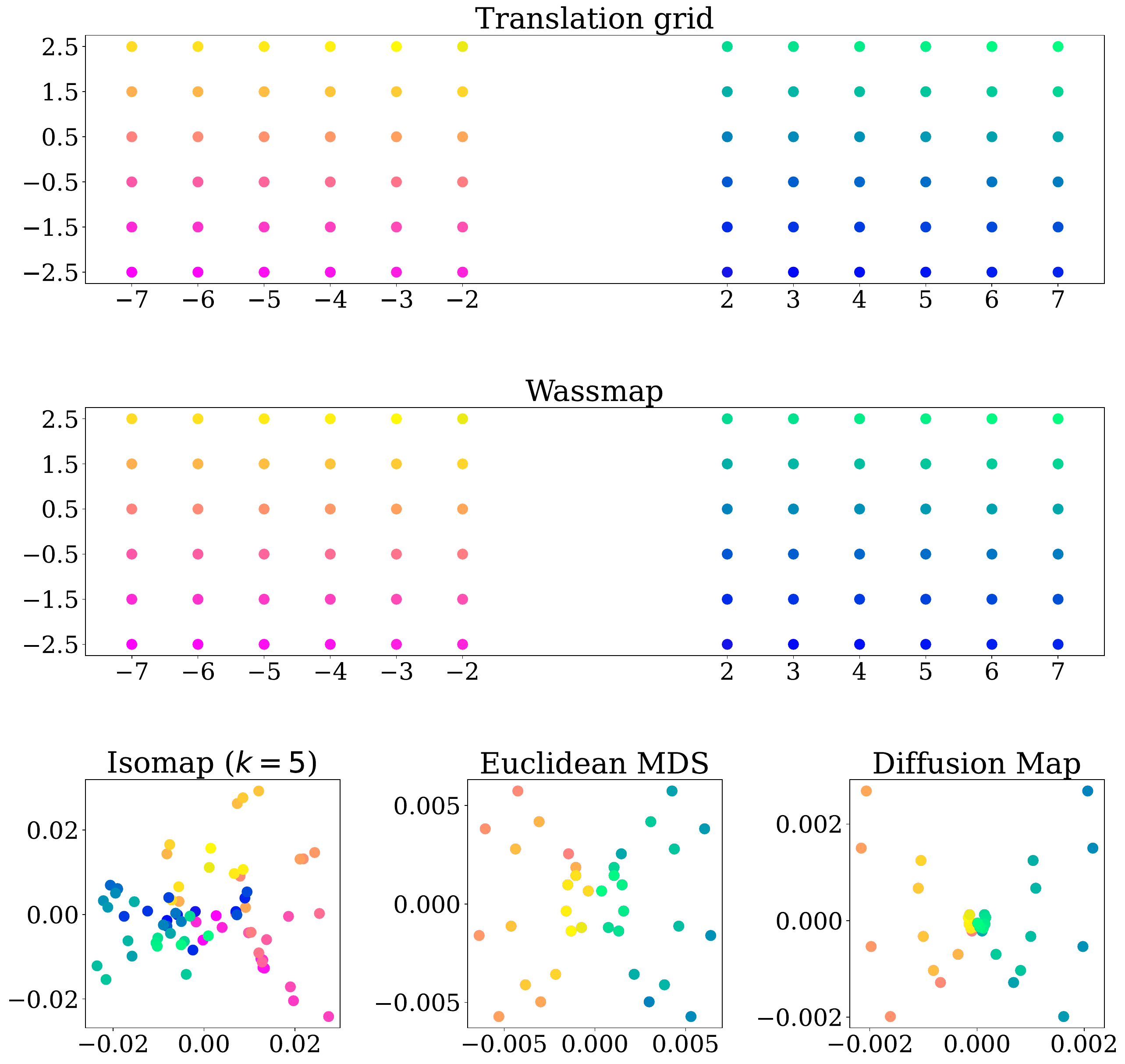}
    \caption{\edit{Translation manifold generated by the characteristic function of the unit disk with parameter set $\Theta_2$. We consider a $6\times 6$ grid in each disjoint piece of the parameter space to generate $\{\theta_i\}$.  Shown are the original translation grid, Wassmap embedding, Isomap, Euclidean MDS, and Diffusion map embeddings. Note that both Euclidean MDS and Diffusion Map have duplicated embedding points (explaining the apparent lack of some points in these embeddings).}}
    \label{FIG:TranslationNonconvex}
\end{figure}

Both \cref{FIG:Translation,FIG:TranslationNonconvex} show that Wassmap recovers the underlying translation grid up to rigid motion as predicted by \cref{THM:WassmapTranslation}; in \cref{FIG:Translation} a rotation appears, but the side-lengths of the embedded grid are 2 as in the original parameter set $\Theta_1$.  In both experiments, the translated discs overlap; consequently, the other methods produces embeddings that appear coherent despite failing to recover the parameter set exactly. In particular, the other embeddings of $\Theta_1$ appear to be morphed grids, and the scale is dilated in a way that the Wassmap embedding is not. Two advantages of Wassmap in this case are that it is not subject to careful parameter tuning, and the size of the disk and its relation to the parameter grid does not matter, whereas for other methods parameter tuning may play an important role, and the translates of the disk must overlap significantly for the pairwise Euclidean distances to be meaningful.

\Cref{FIG:TranslationNonconvex} illustrates that Wassmap is capable of recovering nonconvex translation parameter sets in contrast to both discrete and continuum Isomap \cite{donoho2005image}. \edit{The Isomap embedding is largely incoherent in this case, while the Euclidean MDS and Diffusion Map embeddings exhibit significant skewing as well as overlapping points in the embedding.}

\subsection{Dilation manifold}
To illustrate the case of dilations, we consider the same base measure as in the translation case (disc support function centered at $(0,0)$), but now apply the anisotropic dilation transformation $D_\theta$ as discussed in \cref{subsec:dilationthy}, where the parameters $\vartheta_1,\vartheta_2$ come from a regular 4x4 subgrid of $[0.5,2]\times[0.5,4]$.  The dilation parameter grid and result of the different embeddings are shown in \cref{FIG:Dilation}. \edit{The Euclidean MDS embedding is the next best compared to Wassmap, the latter of which recovers the structure of the parameter set faithfully, but all other embeddings are relatively poor.} Note that the dilation grid has size $1.5\times3.5$, and the Wassmap embedding has size approximately $1.75\times .75$. One can compute the projected second moment of the base measure $d\mu_0 = \frac{1}{\pi}\mathbbm{1}_D dx$ as $(M_2(P_i\mu_0))^\frac12 = \frac{1}{2}$. \Cref{COR:WassmapDilation} states that Wassmap recovers the dilation grid up to this factor and a global rotation.  Thus we see that the Wassmap embedding does recover the original dilation grid multiplied by $0.5$ (the moment term) rotated by $\pi/2$.

\begin{figure}[h!]
    \centering
    \includegraphics[width=0.9\textwidth]{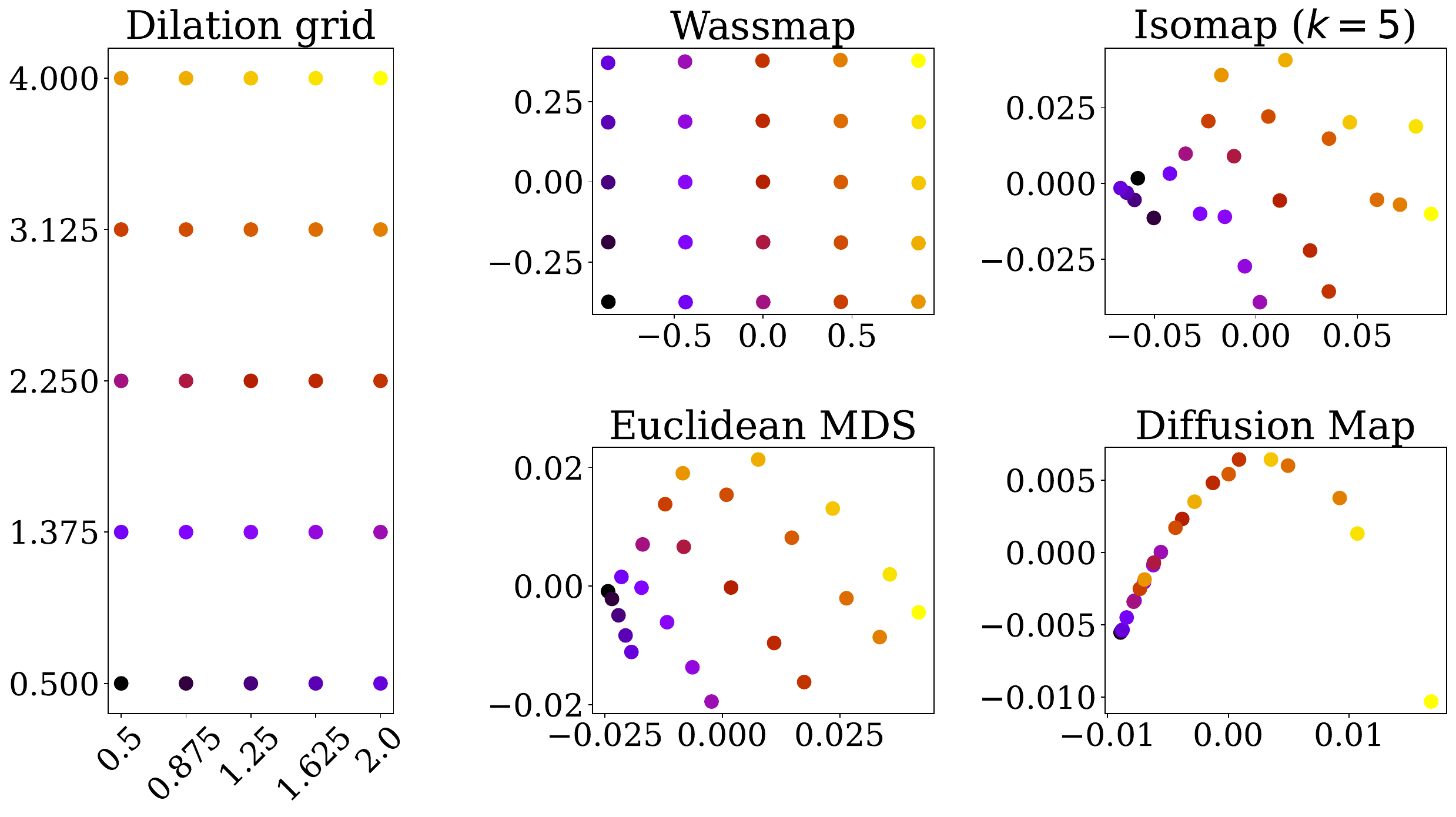}
    \caption{\edit{Dilation manifold generated by the characteristic function of the unit disk with parameter set $\Theta_3=[0.5,2]\times[0.5,4]$. We consider a uniform $5\times 5$ grid to generate $\{\theta_i\}$. Shown are the original dilation grid, the Wassmap, Isomap, Euclidean MDS, and Diffusion map embeddings.}}\label{FIG:Dilation}
\end{figure}

\subsection{Rotation manifold} To illustrate the case of rotational manifolds, we consider the base measure $\mu_0$ as the indicator of an ellipse with major radius 1 and minor radius 0.5, centered at $(x,y)=(0,1)$ This measure is rotated about the origin to obtain the sampled manifold at uniform angles $\theta_i\in[0,2\pi]$; embedding results are shown in \cref{FIG:RotationCentered}.  We see that \edit{all methods approximately or exactly recover a circular manifold.} This experiment provides evidence that Wassmap is capable of recovering rotational manifolds, though at present we are not able to prove this as discussed in \cref{SEC:Rotation}. 


\begin{figure}[h!]
    \centering
    \includegraphics[width=0.9\textwidth]{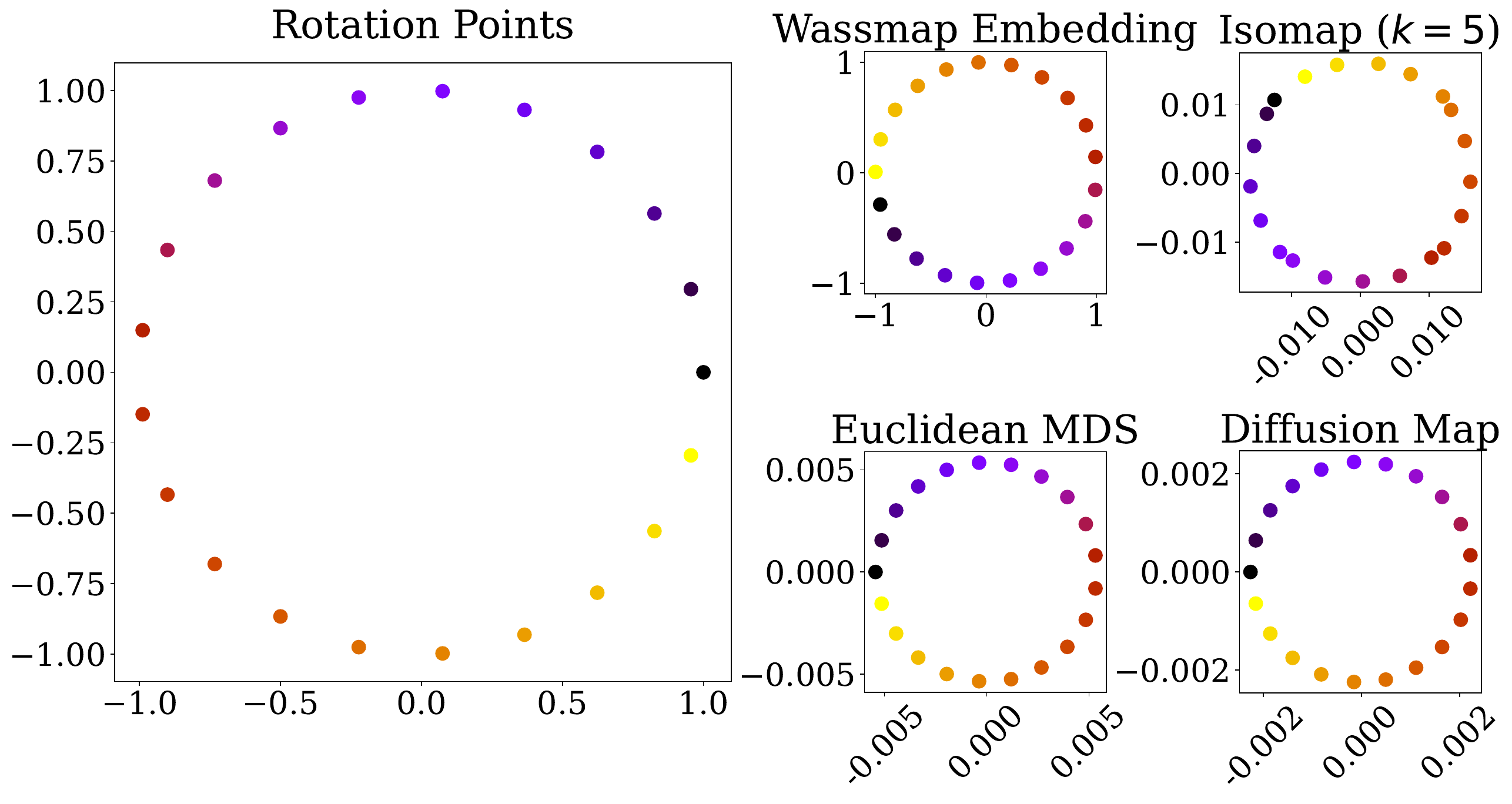}
    \caption{\edit{Rotation manifold generated by the characteristic function of an ellipse with major radius 1 and minor radius 0.5 centered at $(0,1)$. Rotation angles are uniformly sampled between $0$ and $2\pi$. Shown are the original points on the circle $(\cos\theta_i,\sin\theta_i)$, the Wassmap embedding, the Isomap, Euclidean MDS, and Diffusion Map embeddings.  Note that while the other methods produce circular embeddings, only Wassmap exactly reconstructs the unit circle.}}\label{FIG:RotationCentered}
\end{figure}



\subsection{Embedding MNIST}\label{SEC:MNIST}

\edit{Here we show the effect of Wassmap on embedding MNIST handwritten digits \cite{lecun1998mnist}.  We randomly sample 250 handwritten 1s, 2s, 7s and 9s from MNIST and compute the 2-dimensional Wassmap embeddings corresponding to two different choices of $k$ when forming the $k$NN graph.  We also computed the Isomap embeddings based on a $k$NN graph with the same $k$; recall that for both Wassmap and Isomap, the large $k$ limit corresponds to skipping the Geodesic computation step and using the raw pairwise distances.  \Cref{FIG:MNIST01} shows the resulting embeddings, with Euclidean MDS and Diffusion Maps also shown for comparison.   Note that Wassmap produces an embedding for which, for any value of $k$, the classes are relatively easily separated by a kernel SVM or nearest neighbor classifier, whereas the Isomap embedding appears sensitive to the choice of $k$ and results in nontrivial class overlaps for any $k$.  Both the Euclidean MDS and Diffusion Map embeddings also struggle to separate classes, particularly 9s and 7s.  A full analysis of classification performance and its dependence on the embedding dimension $d$ is a subject of future work.}

\begin{figure}[h!]
    \centering
    \includegraphics[width=0.9\textwidth]{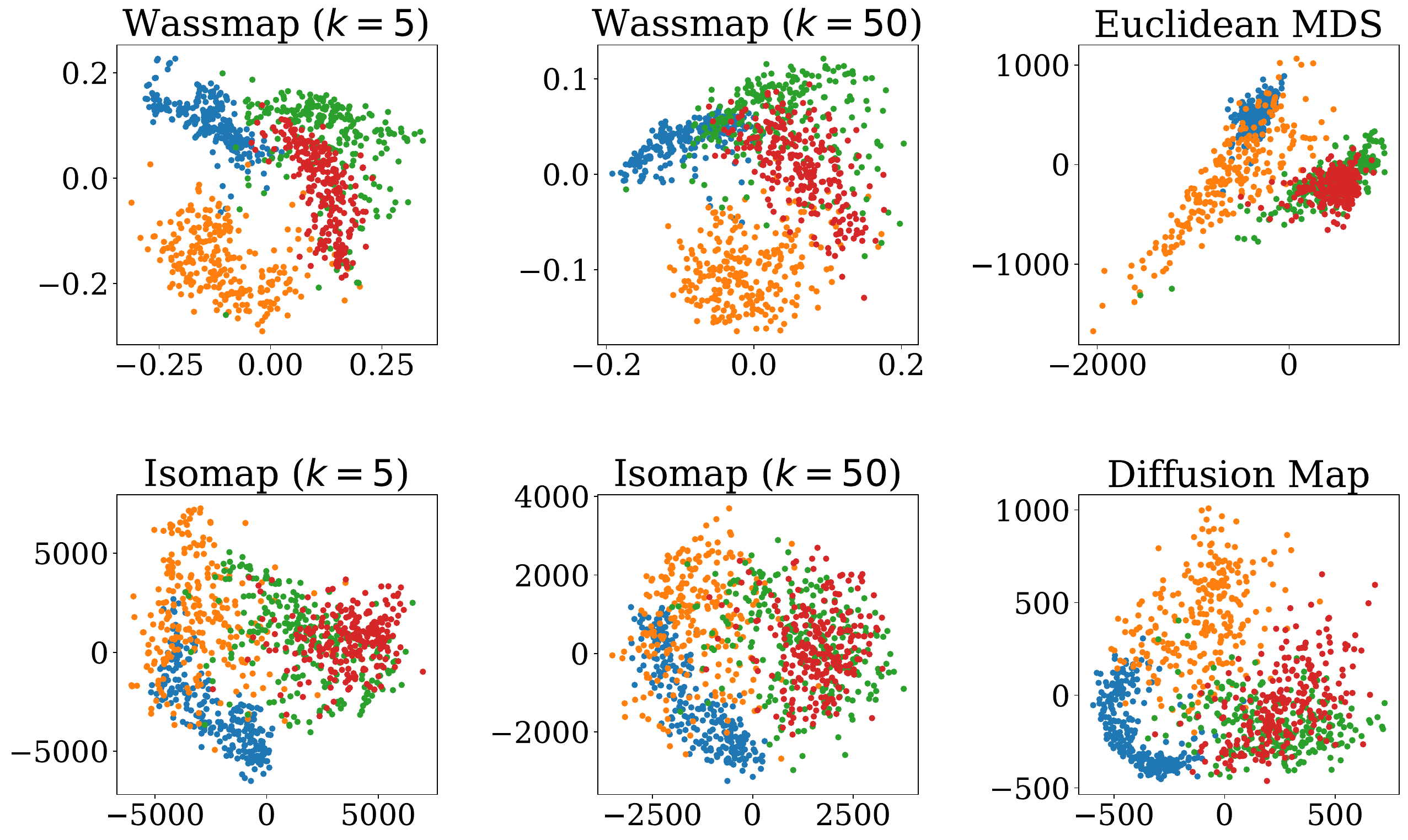}
    \caption{\edit{Random sample of 250 1s (blue), 2s (orange), 7s (green) and 9s (red) from MNIST. On the top row are shown the Wassmap embeddings for two values of the $k$NN parameter along with the Euclidean MDS embedding.  On the bottom row, Isomap embeddings with two $k$NN values are shown along with a Diffusion Map embedding.  }}
    \label{FIG:MNIST01}
\end{figure}

{\color{black}
\section{Computational aspects}\label{SEC:Computation}

From a practical perspective, the biggest drawback of the proposed approach is that of computational cost. Here, we discuss several relaxations and approximations that can be done that speed up Wassmap.

\subsection{Approximations of Wasserstein distances}

The experiments done here used the exact linear program solver to compute Wasserstein distances, but one can trade off accuracy of the embedding with speed of computation by utilizing approximation algorithms that approximate each of the Wasserstein distances. In general, an exact Wasserstein distance computation between discrete measures with $n$ points of mass carries complexity $\Omega(n^3\log n)$ without enforcing additional constraints on the measures \cite{pele2009fast}.  However, one can utilize any Wasserstein distance approximation in the Wassmap framework; some examples are entropic regularization and Sinkhorn distances \cite{altschuler2019massively,bonafini2021domain,cuturi2013sinkhorn,lin2022efficiency,schmitzer2019stabilized}, multiscale methods \cite{gerber2017multiscale,glimm2013iterative,schmitzer2016sparse}, and linearization techniques \cite{greengard2022linearization}. Sinkhorn distances can be computed in $O(n^2)$ time, but many approximate Wasserstein distance methods do not have concrete computational complexities, which makes it difficult to write down an overall complexity for Wassmap.

\begin{figure}
    \centering
    \includegraphics[width=.8\textwidth]{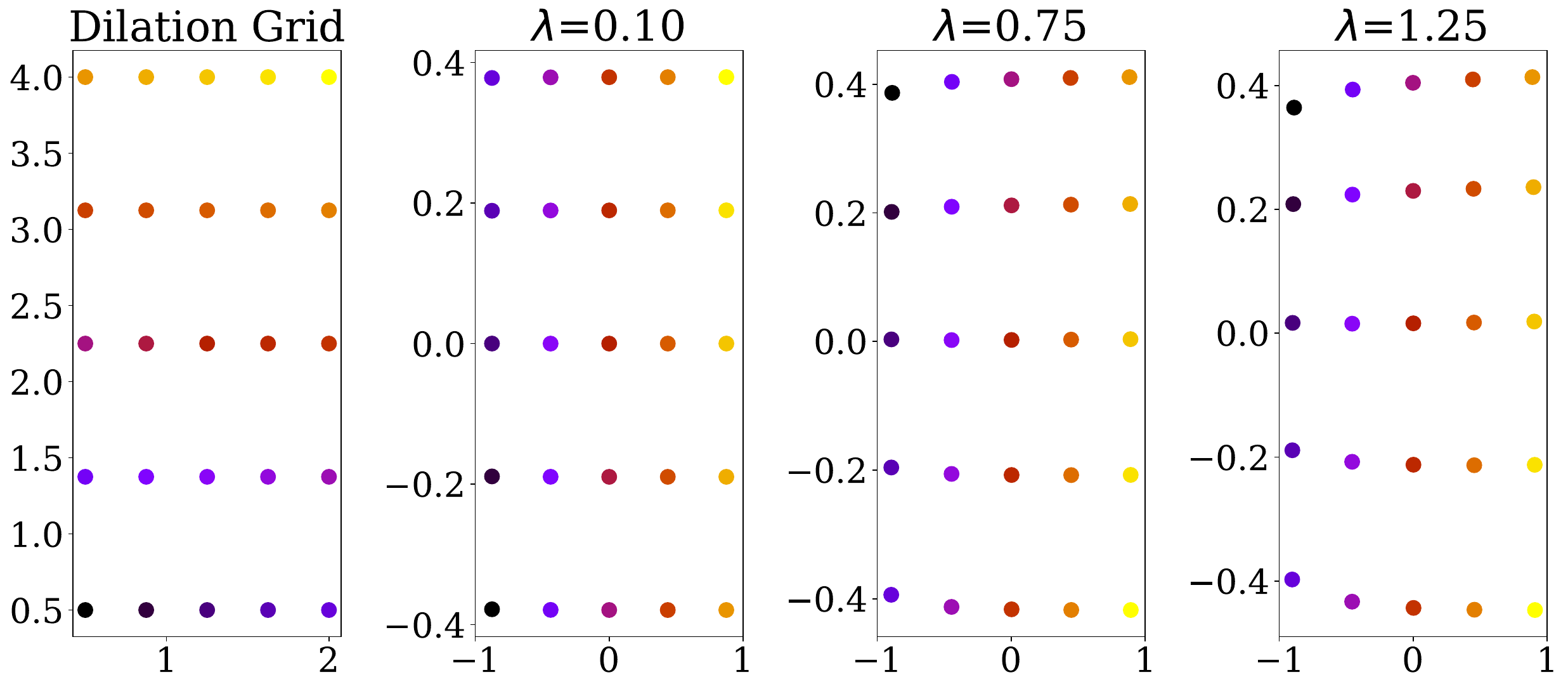}
    \caption{Dilation experiment with the same setup as in \cref{FIG:Dilation}. The Wasserstein distance matrix is approximated by the Sinkhorn distance with different regularization parameters $\lambda$.  The right embedding took about a third as long as the standard Wassmap embedding.}
    \label{FIG:Sinkhorn}
\end{figure}

\subsection{Approximating the MDS kernel matrix}

One note regarding \cref{ALG:Wassmap} is that the Wasserstein distance computations can be done in parallel, which can greatly speedup the computation time, although computing $O(N^2)$ Wasserstein distances is still prohibitive for large $N$.

One could use the Linear Optimal Transport (LOT) framework developed by Wang et al. \cite{wang2013linear} and furthered by several others \cite{aldroubi2021partitioning,khurana2022supervised,kolouri2017optimal,moosmuller2020linear,park2018cumulative} as an alternative. Cloninger and Moosm\"{u}ller show that for some types of object manifolds, the LOT distance is equal to the Wasserstein distance between manipulated images, hence our theoretical recovery results could be obtained for LOT distances in these cases as well.  Here, the LOT distance is defined as the $L_2$--norm between the transport (Monge) maps from each image to a fixed reference measure (e.g., a Gaussian). In this way, one could compute $O(N)$ LOT distances rather than $O(N^2)$ Wasserstein distances prior to MDS.  One potential drawback of this approach is that for more curved manifolds in Wasserstein space, or for diffeomorphisms that do not satisfy the compatibility condition of \cite{moosmuller2020linear}, the LOT approximation is not exact, and the embedding may suffer from this. In followup work to this paper \cite{cloninger2023linearized} we explored further the use of LOT in the Wassmap framework, and provide a computationally efficient LOT Wassmap algorithm as well as corresponding approximation guarantees which estimate how far the approximate embedding is from the Wassmap embedding when using LOT distances, empirical samples from data measures, and entropic regularization to compute transport maps.

An alternative would be to use the Nystr\"{o}m method \cite{gittens2013revisiting,williams2001using} to approximate the squared distance matrix ($B$) of MDS directly. A Nystr\"{o}m approximation of a symmetric matrix is of the form $B\approx CW_r^\dagger C^T$ where $C = W(:,I)$ is a column submatrix of $B$, $W=B(I,I)$ is the intersection of $C$ and $C^T$, and $W_r^\dagger$ is the Moore--Penrose pseudoinverse of $W_r$, which is the truncated SVD of $W$ of rank $r$. It is known that in cases where a kernel matrix is incoherent and approximately low-rank, a rank $r$ Nystr\"{o}m approximation of an $N\times N$ matrix can yield a good approximation by only computing $O(r\log N)$ columns of the kernel matrix. For the Wasserstein distance matrix, this would allow one to perform only $O(rN\log N)$ Wasserstein distance computations, which is slightly more than using LOT but considerably less than computing the full distance matrix. As an illustration, we rerun the dilation experiment above but with a $12\times12$ grid in the parameter space (\cref{FIG:Nystrom}), and we sample 4 out of 144 ($\log N$) columns to form the Nystr\"{o}m approximation to the Wasserstein distance matrix, which we then embed via MDS. For each Wasserstein computation we use the Sinkhorn algorithm of Cuturi. One can see that the shape of the embedding remains, though there are some errors.

\begin{figure}[h!]
    \centering
    \includegraphics[width=.9\textwidth]{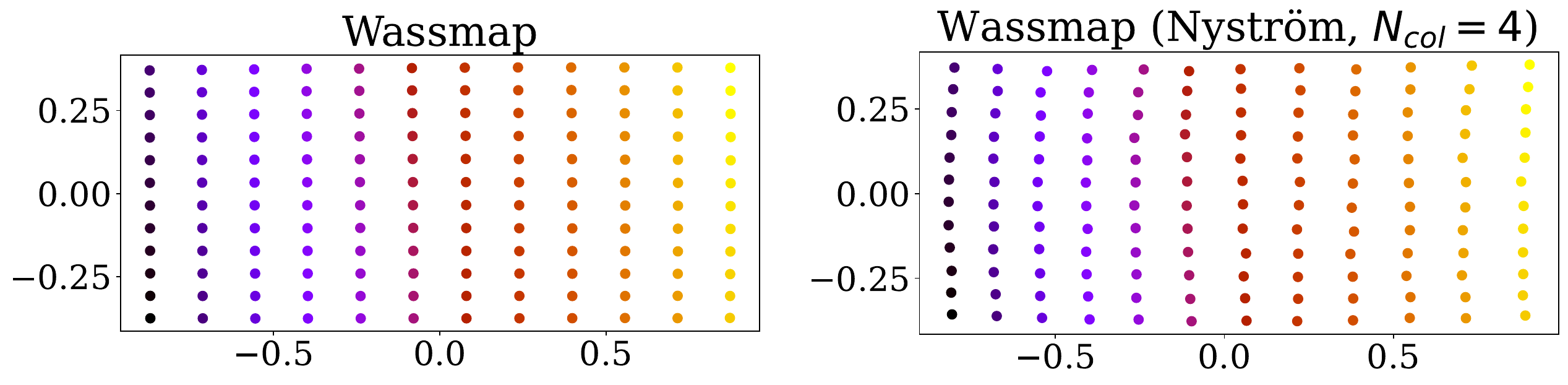}
    \caption{Dilation experiment with the same parameter space and base measure as in \cref{FIG:Dilation}, but with a $12\times12$ grid. The Wasserstein distance matrix is approximated by a Nystr\"{o}m approximation with only 4 out of 144 columns being computed.}
    \label{FIG:Nystrom}
\end{figure}

Note that this Nystr\"{o}m approximation is of the squared Wasserstein distance matrix in \cref{ALG:DiscreteWassmap}, which is different than the method of Altschuler et al.~\cite{altschuler2019massively}, which uses the Nystr\"{o}m method to approximate the cost matrix for a single Wasserstein distance computation between a pair of measures. These methods can be combined to yield a faster, more scalable approximation of our Wassmap algorithm.

\subsection{Updating an MDS embedding}

In practice, if one already has a low-dimensional embedding via MDS, one might want to update the embedding when a new image is obtained. This can easily be done in the following way: first compute the Wasserstein distance from the new image to the other images (this requires $N$ $W_2$ computations), and perform a rank-one update of the SVD of the matrix $B$ in \cref{ALG:Wassmap} (or \cref{ALG:DiscreteWassmap}), which can be done in $O(Nd+d^3)$ flops \cite{brand2006fast}, which is much faster than recomputing a new SVD of $B$ at a cost of $O(N^2d)$ (here $d$ is the embedding dimension, which is the rank of the embedding matrix in MDS).

\subsection{Choosing the embedding dimension}

In any method requiring choice of embedding dimension, there are several methods one can employ. For example, a scree plot of the singular values of the distance matrix can give an idea by looking for an elbow, or bend in the graph. More sophisticated techniques involve estimating the dimension of the manifold by local PCA or something similar \cite{little2009multiscale,little2009estimation,little2017multiscale}.
}

\section{Conclusion and Future Outlook}

This paper proposed the use of Wasserstein distances in the Isomap algorithm \edit{(and its precursor MDS)} as a more suitable measure of distance between images. The resulting Wassmap algorithm and its variants were shown to recover (up to rigid transformation) several parametrizations of image manifolds, including translation and dilation sets. We provided a bridge which transfers functional manifold recovery results to discrete recovery, which illustrate that the Discrete Wassmap algorithm recovers parametrizations of image manifolds generated by discrete measures. The practical experiments illustrate the effectiveness of the proposed framework on various synthetic and benchmark data. There is more to be explored regarding Wassmap, including its potential to recover rotation manifolds, those generated by composition of different operations (e.g., translation plus dilation or rotation), and manifolds generated by some class of parametrized diffeomorphisms acting on one or multiple generators. It also remains to explore the effects of additive noise ($\eta$ in \eqref{EQN:Imaging}) and the structure of the imaging operator $\mathcal{H}$.

Future work will also explore the use of Wasserstein distances in other manifold learning paradigms, including local methods such as LLE and tSNE, as well as use of $W_p$ for other $p\in[1,\infty)$ (for example, Kileel et al.~\cite{kileel2021manifold} approximate the classic Earthmover's Distance $W_1$ in the Laplacian eigenmap setting). One could easily replace Euclidean distances with Wasserstein distances in a na\"{i}ve way in any manifold learning algorithm such as Laplacian eigenmaps or Diffusion Maps. We have done this in a simple example in the Supplemental Material here, but the results are not competitive with other methods for this particular task of parametrization recovery. Better understanding how to use these algorithms for data coming from submanifolds of Wasserstein space would be an interesting avenue of future study.  

\edit{Additional studies will be done on choosing $\eps$ adaptively for the neighborhood graph step \cite{mekuz2006parameterless}, and combination of Wasserstein distance based algorithms with task performance such as classification or clustering (see \cite{liu2022wasserstein} for recent work in this direction).}

\section*{Acknowledgements}

KH thanks Alex Cloninger, Longxiu Huang, Anna Little, Daniel McKenzie, James Murphy, Gustavo Rohde, Bernhard Schmitzer, and Matthew Thorpe for helpful discussions regarding this work. The authors thank the referees for their feedback which significantly improved the presentation and results of the paper.

\bibliographystyle{siamplain}
\bibliography{wassmap.bib}

\newpage
\appendix







\section{Python implementation notes}

The Python code for this paper may be found at \url{https://github.com/Wassmap/wassmap}.  For computing Isomap embeddings, we use the \texttt{sklearn.manifold} package, the \texttt{pydiffmap} package for diffusion map embeddings, and \texttt{networkx} for graph-based computations. We use the Python Optimal Transport (POT) package \cite{flamary2021pot} to compute Wasserstein distances between measures. In particular, we use the \texttt{emd2} function to compute Wasserstein distances, which employs the algorithm of \cite{bonneel2011displacement}; we also use the \texttt{sinkhorn} function for entropic regularization of Wasserstein distances. In future work, attention will be paid to using other Wasserstein approximations, but for now we content ourselves with a fixed computational algorithm to illustrate the general results, and do not yet attempt to fully optimize computation time.  

Note that Isomap expects images in pixel/voxel format (i.e., as a 2-d or 3-d array of scalars), whereas Wassmap expects images in point cloud form, i.e., 
\begin{align}
    \mu_{\theta_i} = \sum_{i} f_{\theta_i}(x_i)\delta_{x_i}.
\end{align}
For Wasserstein distance computations, we assume that no $f_{\theta_i}(x_i) =0 $, i.e., points with zero evaluated density are dropped, which may result in different images having distinct number of points. 

In the code, we have provided functions to convert between pixel/voxel and point cloud representations. Typical usage may be found on the GitHub repository.

\section{Including shortest path computations}
\; In the original Isomap paper \cite{tenenbaum2000global} and subsequent work, geodesic distances on the manifold are estimated by first constructing a neighbor graph (typically via $k$-nearest neighbor or $\eps$-neighbor), then employing all-pairs-shortest-path (typically via Dijkstra's or the Floyd-Warshall algorithm) to estimate manifold geodesic distances.  We have proposed a method such that many interesting manifolds do not require this additional step because the 'raw' Wasserstein distance matrices are provably Euclidean (in the sense of MDS).  To illustrate this graphically, consider the translation manifold generated for Figure 1.  Reducing the graph either via $\eps$-neighborhood or $k$NN results in a worse embedding; this is illustrated in \cref{fig:geodesic_test1}.  
\begin{figure}[h!]
    \centering
    \includegraphics[width=0.8\textwidth]{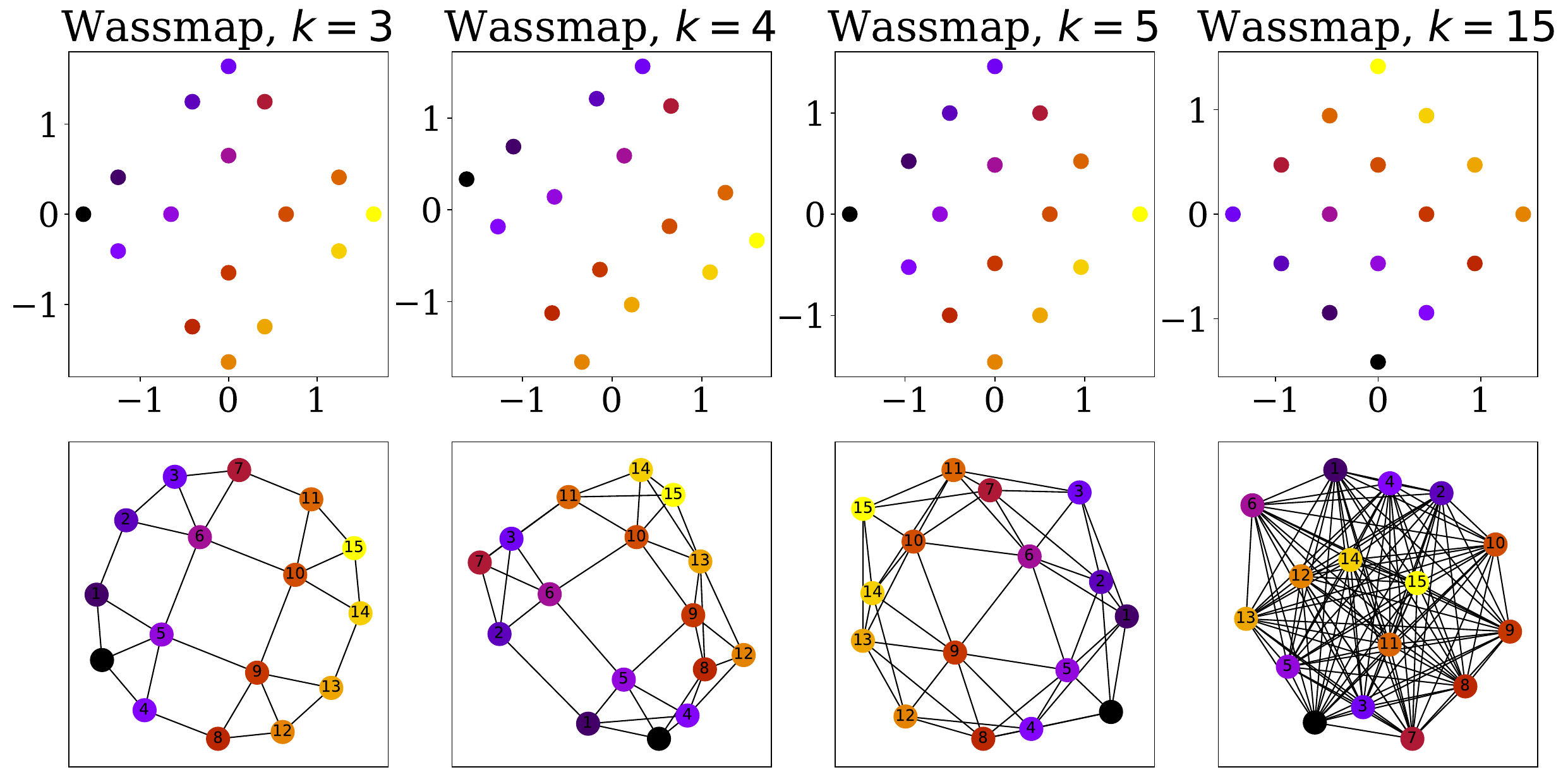}
    \caption{Demonstration of the construction of a Wasserstein $k$NN graph followed by all-pairs-shortest-paths leading to worse embeddings.  The rightmost embedding corresponds to the complete graph, corresponding to the setting of Theorem 3.4. Note that the $k=5$ embedding is slightly curved.}
    \label{fig:geodesic_test1}
\end{figure}

This experiment shows what is already well-understood regarding Isomap and MDS: flat manifolds can be recovered via MDS embeddings, while curved manifolds require the graph geodesic step in Isomap.

\section{Second rotation experiment}

Here we show a second rotation experiment where we take $\mu_0$ to be the indicator function of an ellipse with major radius 1 and minor radius 0.5, but whose initial center is at the origin $(x,y) = (0,0)$. We rotate the ellipse by $N=21$ uniformly sampled angles in $[0,2\pi)$, and the results of the embeddings are shown in \cref{FIG:RotationUncentered}.  Due to rotational symmetry of the base figure, we would not expect any method to recover the `correct' manifold in this case. 

\begin{figure}[h!]
\centering 
     \includegraphics[width=\textwidth]{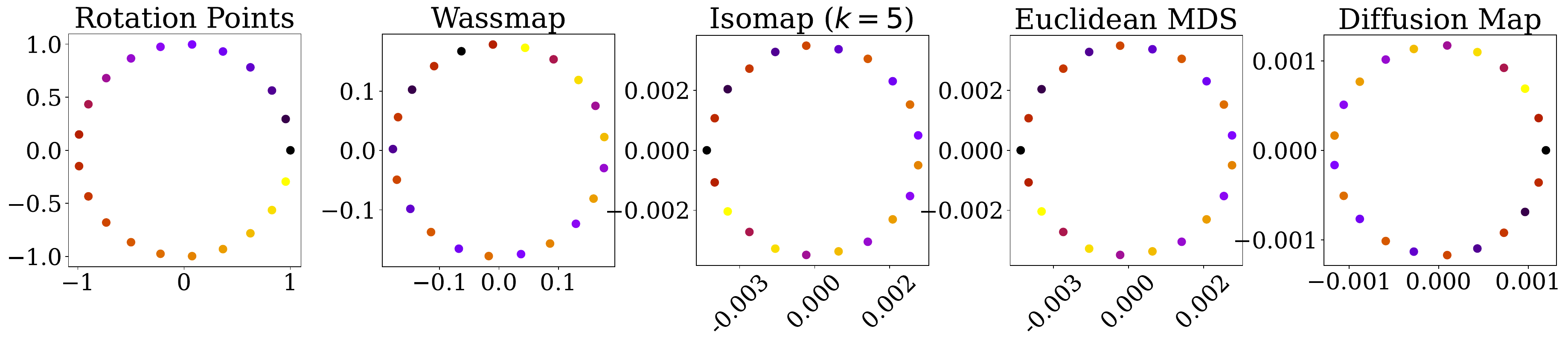}
    \caption{Rotation manifold generated by the characteristic function of an ellipse with major radius 1 and minor radius 0.5 centered at $(0,0)$. Rotation angles ($N=21$) are uniformly sampled between $0$ and $2\pi$. Shown are the original points on the circle $(\cos\theta_i,\sin\theta_i)$, the images $\mu_{\theta_i}$ plotted on the same figure, and the Wassmap embedding.  Note that all methods recover a circle but are unable to maintain the correct order, due to rotational symmetry of the base figure. }\label{FIG:RotationUncentered}
\end{figure}

\section{On low-rankness of Wasserstein distance matrices}

In section 6.2, we discuss the use of the Nystr\"{o}m method for approximating the Wasserstein distance matrix $W$ of the Wassmap algorithm. Use of the Nystr\"{o}m method requires that the kernel matrix in question be approximately low-rank.  Here we show scree plots the spectrum of $W$ from various experiments in the main paper.

\begin{figure}[h!]
    \centering
    \includegraphics[width=.9\textwidth]{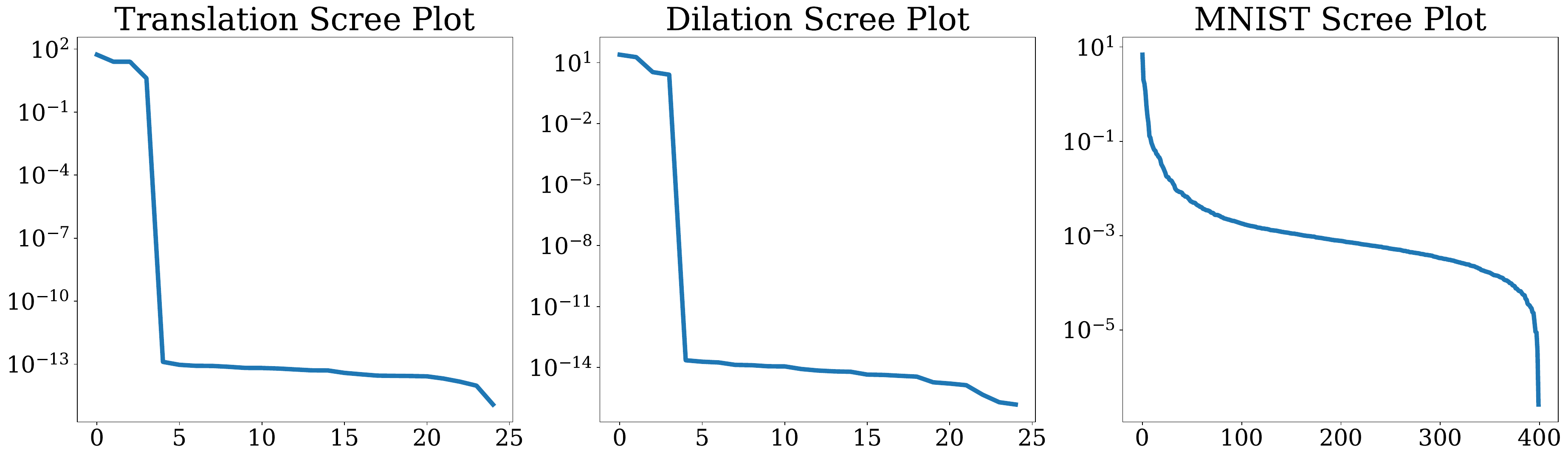}
    \caption{Scree plots of singular values of the square Wasserstein distance matrices for the first translation experiment (left), the dilation experiment (center) and the MNIST experiment (right).}
    \label{FIG:screeplots}
\end{figure}

One can see that the translation distance matrix is well-approximated by a low-rank matrix. The MNIST distance matrix can be approximated by an approximately rank 25 matrix. The dilation matrix has relatively flat spectrum after the initial drop; because of the scale of the singular values ($O(1)$), this matrix is somewhat harder to approximate, but one can see that one gains no approximation power from taking more than about $10$ singular values until one takes almost all of them.

Note that these scree plots can also be used to estimate the dimensionality of the embedding as discussed in section 6.4.

\section{Using Wasserstein distances in other algorithms}

As mentioned above, a na\"{i}ve extension of our idea to algorithms like Laplacian eigenmaps or Diffusion Maps may not be the correct generalization of those methods. For illustration, we repeat the dilation experiment here and in the bottom center use a simple variant of the Diffusion map in which $W_2$ distances are used instead of Euclidean.  One can see in \cref{FIG:LaplaceWass} that the embedding does not do any better than the diffusion map with Euclidean distances, and we attribute this to the fact that a more sophisticated understanding of how to approximate submanifolds of Wasserstein space via spectral graph theory may need to be developed.

\begin{figure}[h!]
    \centering
    \includegraphics[width=.85\textwidth]{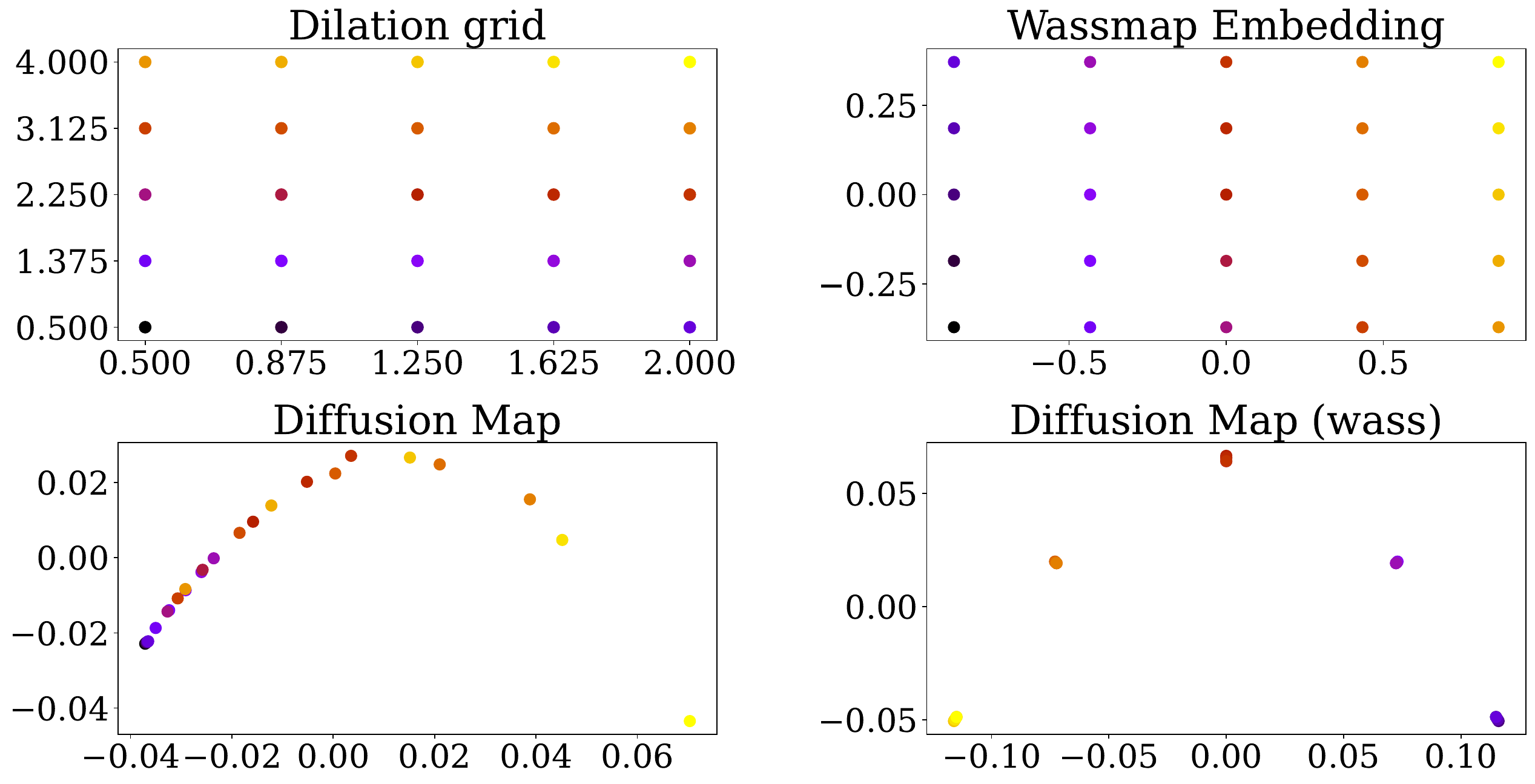}
    \caption{Dilation manifold generated by the characteristic function of the unit disk with parameter set $\Theta=[0.5,2]\times[0.5,4]$. We consider a uniform $5\times 5$ grid to generate $\{\theta_i\}$. Shown are the original dilation grid, the Wassmap embedding, Diffusion Map embedding (bottom left), and Diffusion Map embedding with Wasserstein distance (bottom right).}
    \label{FIG:LaplaceWass}
\end{figure}

\section{Using $W_1$}

We have mainly illustrated the Wassmap algorithm for quadratic Wasserstein distances. However, since $W_1$ distances are also commonly used, it makes sense to consider these. Here, we repeat the dilation example for Wassmap using $W_1$ distances and we see in \cref{FIG:W1} that the embedding fails to recover the grid exactly up to scaling; however, the embedding does still maintain the general characteristics of the dilation set. It is perhaps unsurprising that the qualitative behavior of the $W_1$ distance based embedding is similar to that using $W_2$ distances. Indeed, all $W_p$ distances are based on smoothly distorting one image to the other.  This experiment incidates that further study of the case $p=1$ is warranted.

\begin{figure}[h!]
    \centering
    \includegraphics[width=.8\textwidth]{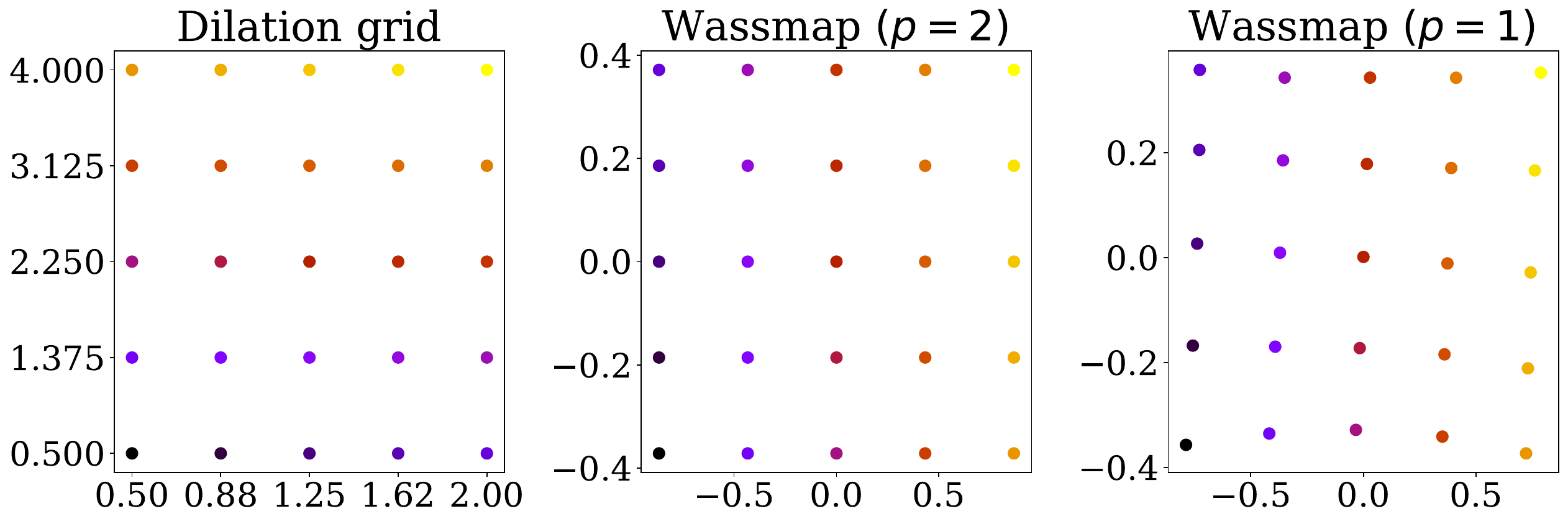} \caption{Original dilation set (left), Wassmap embedding using $W_2$ distances (middle), and Wassmap embedding using $W_1$ distances (right) for the dilation experiment in section 5.2.}
    \label{FIG:W1}
\end{figure}

\section{Mimicking MNIST digits via diffeomorphisms}\label{sec:mnistdiffeo}

Here we will investigate a more general family of diffeomorphisms than those done previously. Our motivation is to mimic the MNIST handwritten digit dataset \cite{lecun1998mnist} by generating morphed elliptic annuli.    We thus consider a parameterized family $f_\theta(x)$ where $\theta = (\theta_0,\theta_1,\theta_2,\theta_3,\theta_4)$ as follows.  First, an elliptic annulus of height one, width $\theta_0$, and thickness $\theta_1$ is generated; denote this by $f^0_\theta(x)$.  Then, a global sheer and local rotation is performed as follows.   Define
\begin{align}
    T_\theta(x) = \left[\begin{array}{cc}
         \cos(\alpha_\theta(x)) & -\sin(\alpha_\theta(x))  \\
         \sin(\alpha_\theta(x)) & \cos(\alpha_\theta(x)) 
    \end{array}\right]x,
\end{align} where $\alpha_\theta(x_1,x_2) = \theta_2\cos(x_1+\theta_3 x_2)\cos(x_2)$, then let $f^1_\theta(x) = f^0_\theta(T_\theta(x))$.  Finally, a global rotation with angle $\theta_4$ is performed, resulting in $f_\theta(x) = f^1_\theta(R_\theta(x))$. We then sample a set of $1296 = 6^4$ such zeros as follows: 
\begin{align*}
    \theta_0 &\sim \text{Unif}(0.2,0.8)\\
    \theta_1 &\sim \text{Unif}(0.05,0.06)\\
    \theta_2 &\sim \text{Unif}(0.2,0.21)\\
    \theta_3 &\sim \text{Unif}(0,0.01)\\
    \theta_4 &\sim \text{Unif}(-\pi/6,\pi/6)
\end{align*}where $\text{Unif}(a,b)$ is the uniform distribution on $(a,b)$.  A subset of 64 such zeros in shown in Figure \ref{fig:deformation_family}.

\Cref{fig:diffeo_2d_3d_comparison} shows the two-dimensional Wassmap and Isomap embeddings of $\{f_{\theta_i}\}$ as defined above, along with the scree plots for each.  The Wassmap embedding shows a more distinctly clustered embedding.  Isomap demonstrates similar structures but does not seem able to separate subtle morphological variations as easily.  The scree plots demonstrate the improved embedding efficiency of Wassmap versus Isomap.  This evidence suggests that Wassmap may be capable of finding the structure of manifolds generated by a restricted family of diffeomorphisms, but future exploration is needed in this case. 

\begin{figure}[h!]
    \centering
        \includegraphics[width=.9\textwidth]{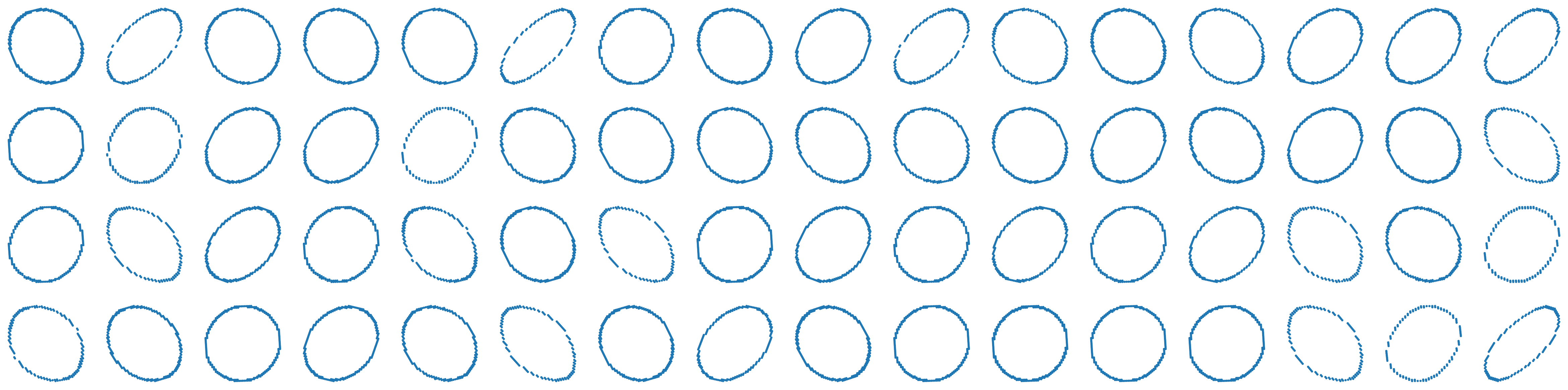}
    \caption{Example simulated MNIST zeros, created by applying transformations to a base elliptical annulus.  }
    \label{fig:deformation_family}
\end{figure}

\begin{figure*}[h!]
    \centering
    \includegraphics[width=.8\textwidth]{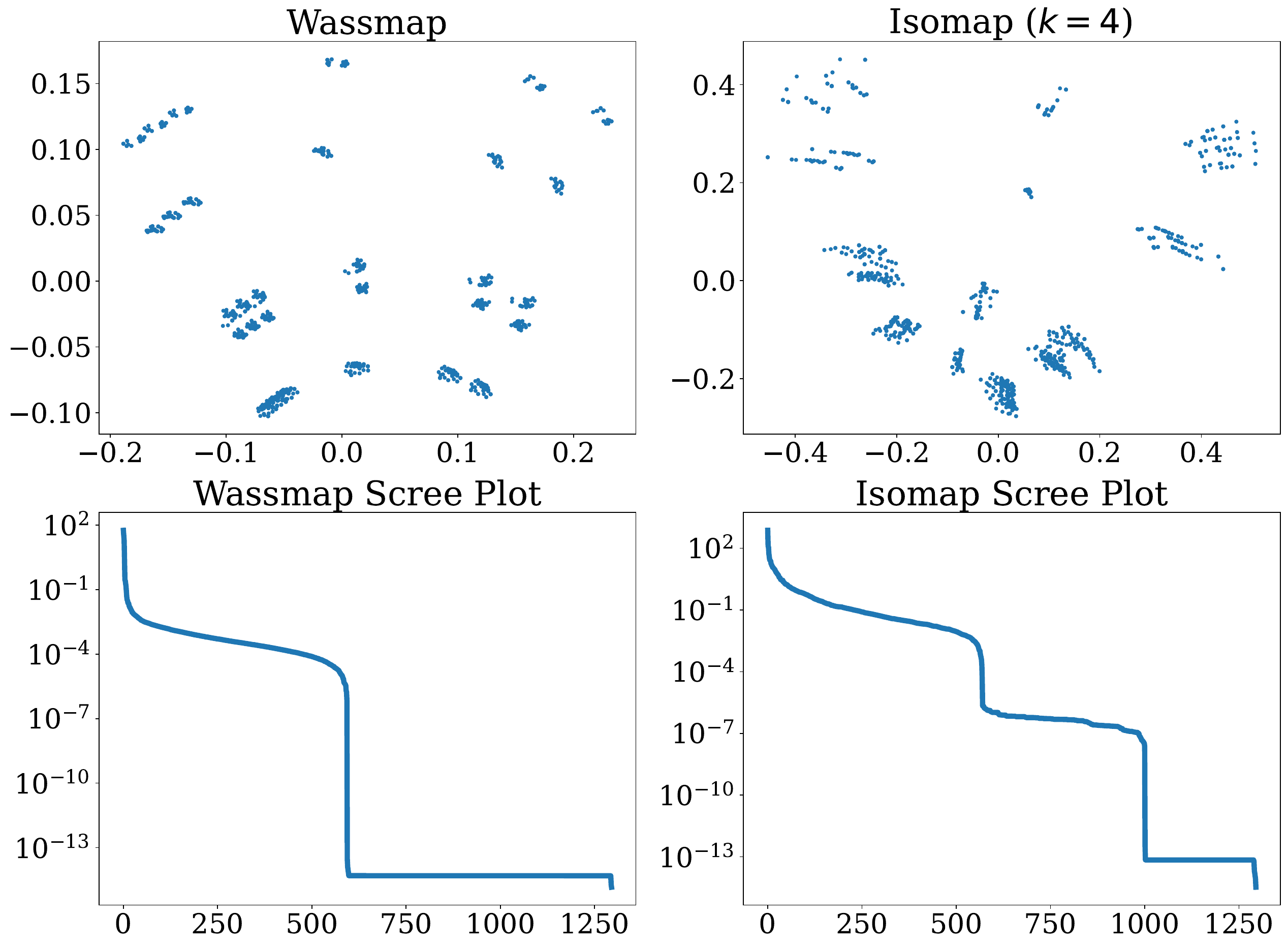}
    \caption{Comparison of the two-dimensional embeddings and scree plots for Wassmap and Isomap for the grid deformation family.  The Wassmap technique produces a much smoother embedding with clear geometric structure, while the Isomap method produces a less coherent embedding with diminshed cluster separation.  The scree plots show that Wassmap is more dimensionally efficient. }
    \label{fig:diffeo_2d_3d_comparison}
\end{figure*}



\end{document}